\providecommand{\pgfsyspdfmark}[3]{}
\renewcommand{\Pr}{\field{P}}
\newcommand{\sX}{\mathcal{X}}
\newcommand{\field}[1]{\mathbb{#1}}
\newcommand{\R}{\field{R}}
\newcommand{\E}{\field{E}}
\newcommand{\dt}{\displaystyle}
\newcommand{\reals}{\mathbb{R}}
\DeclareMathOperator{\Regret}{Regret}
\DeclareMathOperator{\Wealth}{W}
\newcommand{\KL}[2]{\operatorname{KL}\left({#1};{#2}\right)}
\def\ddefloop#1{\ifx\ddefloop#1\else\ddef{#1}\expandafter\ddefloop\fi}
\def\ddef#1{\expandafter\def\csname c#1\endcsname{\ensuremath{\mathcal{#1}}}}
\def\ddef#1{\expandafter\def\csname b#1\endcsname{\ensuremath{{\boldsymbol{#1}}}}}
\def\ddef#1{\expandafter\def\csname h#1\endcsname{\ensuremath{\hat{#1}}}}
\def\ddef#1{\expandafter\def\csname hc#1\endcsname{\ensuremath{\widehat{\mathcal{#1}}}}}
\def\ddef#1{\expandafter\def\csname bar#1\endcsname{\ensuremath{\bar{#1}}}}
\def\ddef#1{\expandafter\def\csname wbar#1\endcsname{\ensuremath{\overline{#1}}}}
\def\ddef#1{\expandafter\def\csname tc#1\endcsname{\ensuremath{\widetilde{\mathcal{#1}}}}}
  \DeclareMathOperator{\EE}{\mathbb{E}}
\DeclareMathOperator{\PP}{\mathbb{P}}
\DeclareMathOperator{\one}{\mathds{1}\hspace{-.18em}}
\newcommand{\fr}[2]{ { \frac{#1}{#2} }}
\newcommand*\diff{\mathop{}\!\mathrm{d}}
\def\cd{\cdot}
\def\dt{{\ensuremath{\delta}\xspace} }
\def\KL{\ensuremath{\normalfont{\mathsf{KL}}}}
\def\ZCP{\ensuremath{\normalfont{\mathsf{ZCP}}}}
\newcommand{\pare}[1]{\left( #1 \right)}
\newcommand{\df}{=}
\let\cbr\undefined
\providecommand{\kjcbr}[2][-1]{
\ensuremath{\mathinner{
\ifthenelse{\equal{#1}{-1}}{ \left\{#2\right\}}{}
\ifthenelse{\equal{#1}{0}}{ \{#2\}}{}
\ifthenelse{\equal{#1}{1}}{ \bigl\{#2\bigr\}}{}
\ifthenelse{\equal{#1}{2}}{ \Bigl\{#2\Bigr\}}{}
\ifthenelse{\equal{#1}{3}}{ \biggl\{#2\biggr\}}{}
\ifthenelse{\equal{#1}{4}}{ \Biggl\{#2\Biggr\}}{}
}} }  \let\cbr=\kjcbr
\providecommand{\onec}[2][-1]{
\ensuremath{\mathinner{
\one\cbr[#1]{#2}
} }  }
\def\kl{{\mathsf{kl}}}
\def\th{\theta}
\def\Th{\Theta}
\def\RR{{\mathbb{R}}}
\def\NN{{\mathbb{N}}}
\def\TV{{\normalfont\text{TV}}}
\def\KL{{\normalfont\text{KL}}}
\def\ZCP{{\normalfont\text{ZCP}}}
\def\zcp{{\normalfont\text{ZCP}}}
\def\dP{{\dif P}}
\def\dQ{{\dif Q}}
\def\sm{\setminus} \begin{acronym}
\title{Better-than-KL PAC-Bayes Bounds}
\begin{document}

\maketitle

\begin{abstract}Let $f(\theta, X_1),$ $ \dots,$ $ f(\theta, X_n)$ be a sequence of random elements, where $f$ is a fixed scalar function, $X_1, \dots, X_n$ are independent random variables (data), and $\theta$ is a random parameter distributed according to some data-dependent \emph{posterior} distribution $P_n$.
  In this paper, we consider the problem of proving concentration inequalities to estimate the mean of the sequence.
  An example of such a problem is the estimation of the generalization error of some predictor trained by a stochastic algorithm, such as a neural network where $f$ is a loss function.
  Classically, this problem is approached through a \emph{PAC-Bayes} analysis
  where, in addition to the posterior, we choose a \emph{prior} distribution which captures our belief about the inductive bias of the learning problem.
  Then, the key quantity in PAC-Bayes concentration bounds
  is a divergence that captures the \emph{complexity} of the learning problem where the de facto standard choice is the \ac{KL} divergence.
  However, the tightness of this choice has rarely been  questioned.
  
  In this paper, we challenge the tightness of the KL-divergence-based bounds by showing that it is possible to achieve a strictly tighter bound.
  In particular, we demonstrate new \emph{high-probability} PAC-Bayes bounds with a novel and \emph{better-than-KL} divergence that is inspired by Zhang et al. (2022).
  Our proof is inspired by recent advances in regret analysis of gambling algorithms, and its use to derive concentration inequalities.
  Our result is first-of-its-kind in that existing PAC-Bayes bounds with non-KL divergences are not known to be strictly better than KL.
  Thus, we believe our work marks the first step towards identifying optimal rates of PAC-Bayes bounds.
\end{abstract}

\begin{keywords}Concentration inequalities, PAC-Bayes, change-of-measure, confidence sequences, coin-betting.
\end{keywords}

\section{Introduction}
\label{sec:intro}

We study the standard model of statistical learning, where we are given a sample of independent observations $X_1, \ldots, X_n \in \cX$,
and we have access to a measurable parametric function $f : \Th \times \cX \to [0,1]$.
In particular, we are interested in estimating the mean of $f$ when $\th$ is a \emph{random} parameter drawn from a distribution chosen by an algorithm based on data (typically called the \emph{posterior} $P_n$).
In other words, our goal is to estimate the mean\footnote{In the following, the integration $\int \equiv \int_{\Th}$ is always understood w.r.t. $\th \in \Th$ while the expectation $\E$ is always taken w.r.t. data.}
\begin{align*}
  \int \E[f(\th, X_1)] \diff P_n(\th)~.
\end{align*}
In many learning scenarios, $f(\th, X_i)$ is interpreted as a composition of a loss function and a predictor evaluated on example $X_i$, given the parameter $\th \sim P_n$.
Often, this problem is captured by giving bounds on the \emph{generalization error} (sometimes called \emph{generalization gap})
\begin{align*}
  \int \Delta_n(\th) \diff P_n(\th), \qquad \text{where} \qquad
  \Delta_n(\th) = \frac1n \sum_{i=1}^n (f(\th, X_i) - \E[f(\th, X_1)])~.
\end{align*}

To have a sharp understanding of the behavior of the generalization error one often desires to have bounds that hold with high probability over the sample.
At the same time, standard tools for this task, such as concentration inequalities (for instance, Chernoff or Hoeffding inequalities) are not applicable here, since $P_n$ itself depends on the sample and can be potentially supported on infinite sets (which precludes the use of union bounds).
In the particular setting of a randomized prediction discussed here,
studying such bounds was a long topic of interest in the
\emph{PAC-Bayes} analysis~\citep{shawetaylor1997pac,McAllester98}.
PAC-Bayes bounds typically require an additional component called the \emph{prior} distribution over parameter space, which captures our belief about the inductive bias of the problem.
Then, for any data-free prior distribution $P_0$, a classical result holds with probability at least $1-\delta$ (for a failure probability $\delta \in (0,1)$) over the sample, \emph{simultaneously} over all choices of data-dependent posteriors $P_n$:
\begin{align}
  \label{eq:mcallester}
  \int \Delta_n(\th) \diff P_n(\th) 
  = \cO\pare{ \sqrt{\frac{D_{\KL}(P_n, P_0) + \ln \frac{\sqrt{n}}{\delta} }{n}} } \quad \text{as} \quad n \to \infty~.
\end{align}
While such a bound is uniform over all choices of $P_n$, it does scale with a \ac{KL} divergence between them, $D_{\KL}(P_n,P_0) = \int \ln(\diff P_n/ \diff P_0) \diff P_n$,  which can be thought of as a measure of complexity of the learning problem.
Over the years, PAC-Bayes bounds were developed in many ever tighter variants, such as the ones for Bernoulli losses~\citep{LangfordCaruana2001,seeger2002,maurer2004note}, exhibiting fast-rates given small loss variances  \citep{tolstikhin2013pac,MhammediGG19}, data-dependent priors~\citep{rivasplata2020pac,awasthi2020pac}, and so on.
However, one virtually invariant feature remained: high probability PAC-Bayes bounds were always stated using the $\KL$-divergence.
The reason is that virtually all of these proofs were based on the Donsker-Varadhan \emph{change-of-measure} inequality\footnote{For any measurable $F$, and any $P_n,P_0$, the inequality states that $\int F \diff P_n \leq D_{\KL}(P_n, P_0) + \ln \int e^F \diff P_0$.} (essentially arising from a relaxation of a variational representation of $\KL$ divergence)~\citep{DoVa75}.

Recently several works have looked into PAC-Bayes analyses arising from the use of different change-of-measure arguments~\citep{begin2016pac,alquier2018simpler,ohnishi2021novel},
allowing to replace $\KL$ divergence with other divergences such as $\chi^2$ or Hellinger,
however these results either did not hold with high probability or involved looser divergences (such as $\chi^2$).

In this work we propose an alternative change-of-measure analysis and show, to the best of our knowledge, the first high-probability PAC-Bayes bound that involves a divergence that is \emph{tighter} than $\KL$-divergence.

Our analysis is inspired by a recent observation of \cite{zhang2022optimal}, who pointed out an interesting phenomenon arising in regret analysis of online algorithms~\citep{cesa2021online}: They focused on a classical problem of learning with experts advice~\citep{Cesa-BianchiL06}, where the so-called parameter-free regret bounds scale with the $\sqrt{D_{\KL}}$ between the competitor distribution over experts and the choice of the prior~\citep{LuoS15,OrabonaP16}.
In particular, their analysis
improves over parameter-free rates replacing $\sqrt{D_{\KL}}$ by a divergence with the shape\begin{align}\label{eq:zcp-divergence}
  D_\zcp(P_n, P_0; c) = \int \abs{\fr{\diff P_n}{\diff P_0}(\th) - 1} \sqrt{\ln\pare{1 + c^2 \, \abs{\fr{\diff P_n}{\diff P_0}(\th) - 1}^2 }} \diff P_0(\th), 
\end{align}
where $c$ is a parameter.\footnote{
  There is one minor difference that the original divergence appeared in \citet{zhang2022optimal} as $\sqrt{\ln(1 + |\frac{\diff P_n}{\diff P_0}(\th)-1|)}$ instead of $\sqrt{\ln(1 + c^2|\frac{\diff P_n}{\diff P_0}(\th)-1|^2)}$.
}
We call it \ac{ZCP} divergence.
Interestingly, the \ac{ZCP} divergence enjoys the following upper bound (\Cref{cor:zcp-to-tv-kl}):
\begin{align}
  \label{eq:zcp-to-kl-tv}
  D_\zcp = \tilde{\cO}\big(\sqrt{D_{\KL} \, D_{\TV}} + D_{\TV}\big),
\end{align}
where $D_{\TV}(P_n, P_0) = \frac12 \int |\diff P_n - \diff P_0|$ is the total variation (TV) distance.
Since $D_{\TV} \leq 1$ for any pair of distributions, $D_\zcp$ is orderwise tighter than $\sqrt{D_{\KL}}$ and we show in \cref{sec:bernoulli-advantage} that in some cases the gain can be substantial.

In this paper, we develop a novel and straightforward change-of-measure type analysis that leads to PAC-Bayes bounds with \ac{ZCP} divergence, avoiding the regret analysis of \citet{zhang2022optimal} altogether.

\paragraph{Our contributions}
Our overall contribution is to show a surprising result that the choice of the KL divergence as the complexity measure in PAC-Bayes bounds is suboptimal, which tells us that there is much room for studying optimal rates of PAC-Bayes bounds.

Specifically, we show that the KL divergence of existing PAC-Bayes bounds can be strictly improved using a different, better divergence. 
We achieve it through two main results.
Our first result is a PAC-Bayes bound (\Cref{thm:zcp-hoeffding})
\begin{align*}
\int \Delta_n(\th) \diff P_n(\th) \leq \frac{\sqrt{2} \, D_{\zcp}(P_n, P_0; \sqrt{2 n} / \delta) + 2 + \sqrt{\ln\frac{2 \sqrt{n}}{\delta}}}{\sqrt{n}},
\end{align*}
which readily improves upon \cref{eq:mcallester}, since by upper-bounding a $\ZCP$-divergence we have
\begin{align}
  \label{eq:intro-zcp-bound}
  \int \Delta_n(\th) \diff P_n(\th) = \cO\Bigg(
  \sqrt{\frac{D_{\KL}(P_n, P_0) \, D_{\TV}(P_n, P_0) + \ln \frac{\sqrt{n}}{\delta}}{n}}
  \Bigg)
  \quad \text{as} \quad n \to \infty~.
\end{align}

Our second contribution is a bound that extends to regimes beyond $1/\sqrt{n}$ rate, such as fast rates of order $1/n$ when the sample variance of $f$ is small.
Here, we consider variants of empirical Bernstein inequality and a bound on `little-kl' (a bound for binary or bounded $f()$), see \Cref{sec:recovering-bounds}.
In fact, instead of deriving individual bounds separately, we first show a generic bound that can be relaxed to obtain these cases.
This technique is inspired by the recent advances on obtaining concentration inequalities through the regret analysis of online betting algorithms~\citep{JunO19, orabona2023tight}. 
In particular, we consider the expected  \emph{optimal log wealth} (denoted by $\ln \Wealth_n^*$) of an algorithm that bets on the outcomes $f(\th, X_i) - \E[f(\th, X_1)]$.
Then, using the regret bound of this algorithm (which is only required for the proof), we obtain concentration inequalities from upper bounds on $\int \cdot \dP_n$ of 
\begin{align*}
  \ln \Wealth_n^*(\th) = \max_{\beta \in [-1,1]}\ \ln \prod_{i=1}^n \pare{1 + \beta (f(\th, X_i) - \E[f(\th, X_1)])}~.
\end{align*}
Recently \citet{jang2023tighter} have used this technique to control $\int \ln \Wealth_n^* \dP_n$ in terms of $\KL$ divergence, and used various lower bounds on the logarithm
to recover many known PAC-Bayes inequalities, such as PAC-Bayes Bernstein inequality and others.
As an illustrative example, consider the simple inequality $\ln(1+x) \geq x - x^2$ for $|x| \leq 1/2$:
If one can show that $\int \ln \Wealth_n^* \dP_n \leq \mathrm{bound}(\delta, n, P_n, P_0)$, then the above implies that $\max_{|\beta|\le 1/2}\{ \beta \Delta_n - \beta^2 n\} \leq \mathrm{bound}(\delta, n, P_n, P_0)$ and so we can optimally tune $\beta$ to obtain $\int \Delta_n \dP_n \leq \sqrt{\mathrm{bound}(\delta, n, P_n, P_0) / n}$, which recovers a familiar bound in the shape of \cref{eq:mcallester}.
A more fine-grained analysis leads to an empirical Bernstein type inequality (see Corollary~\ref{cor:empiricalbernstein}). This suggests that the optimal log wealth $\ln \Wealth_n^*$ is the fundamental quantity that unifies various existing types of concentration bounds such as Hoeffding, Bernoulli-KL, and empirical Bernstein inequalities.

Using the above concept of optimal log wealth, we show (see \cref{eq:asymptotic-bound}) that there exists a universal constant $c > 0$, such that almost surely for all distributions $P$ (possibly data-dependent ones),
\begin{align*}
  \limsup_{n \to \infty} \ \frac{\int \ln \Wealth^*_n(\th) \dP(\th)}{\ln^{3/2}(n)} \leq c \, \pare{
  1 + \sqrt{D_{\KL}(P, P_0) \, D_{\TV}(P, P_0)} \, \big(1 + \sqrt{D_{\KL}(P, P_0)}\big)
  }~.
\end{align*}
We state this asymptotic bound in terms of the upper bound on the $\ZCP$ divergence.
However,
compared to \cref{eq:intro-zcp-bound}, this bound is more versatile as it can be used to obtain an empirical Bernstein inequality.
That is, we show later that it implies
\[
  \limsup_{n\to \infty} \ \frac{\abs{\int \Delta_n(\th) \diff P}^2}{\frac1n \pare{\abs{\int \Delta_n(\th) \diff P}  + \hat{V}(P)} \cdot \ln^{3/2}(n)}
  \leq
  c \, \pare{
    1 + \sqrt{D_{\KL} \, D_{\TV}} \, \big(1 + \sqrt{D_{\KL}}\big)
    }~.  
  \]
While it comes with an additional dependency on $\sqrt{D_{\KL}}$, we show later in \cref{sec:divergence} that is still never worse than existing KL-based bounds yet enjoys orderwise better bounds in some instances.

\section{Additional related work}
\paragraph{PAC-Bayes}
\emph{PAC-Bayes} has been a long-lasting topic of interest in statistical learning~\citep{shawetaylor1997pac,McAllester98,catoni2007}, with a considerably interest in both theory and applications; see \citet{alquier2024user} for a comprehensive survey.
Over the years, most of the PAC-Bayes literature was concerned with tightening \cref{eq:mcallester} by using more advanced concentration inequalities and making assumptions about data-generating distributions.
A notable improvement is a
bound that switches to the rate $1/n$ for a sufficiently small sample variance~\citep{tolstikhin2013pac}, an empirical Bernstein inequality, which was further improved by \citet{MhammediGG19}.
Fast rates where also noticed in other bounds, which are useful in situations when losses are sufficiently small~\citep{catoni2007,yang2019fast}.
Several results have also relaxed the independence assumption in PAC-Bayes analysis
through martingale conditions~\citep{seldin2012pac,kuzborskij2019efron,haddouche2022pac,lugosi2023online}.
Finally, a recent surge of practical interest in PAC-Bayes was stimulated by its ability to yield numerically non-vacuous generalization bounds for deep neural networks~\citep{dziugaite2017computing,perez-ortiz2020tighter,dziugaite2021role}.

All of these results, similarly as the classical ones, involve the $\KL$ divergence, which is considered as the de facto standard divergence to be used for PAC-Bayes bounds.
Our paper indicates that this standard KL-based bound is in fact suboptimal by showing that there exists a new divergence that is never worse than KL orderwise while showing strict improvements in special cases.
\paragraph{Other divergences and connection to change-of-measure inequalities}
The focus of this paper is on the `complexity' term, or divergence between the posterior and prior distributions over parameters.
Several works have dedicated some attention to this topic and obtained bounds with alternative divergences.
\citet{alquier2018simpler} studied the setting of unbounded losses and
derived PAC-Bayes bounds with $\chi^2$ divergence instead of $\KL$ divergence, however at the cost of a high probability guarantee: their bound scales with $1/\delta$ rather than $\ln(1/\delta)$.
In another notable work, \citet{ohnishi2021novel} proved a suite of change-of-measure inequalities distinct from the usual Donsker-Varadhan inequality.
Their method is based on a tighter variational representation of $f$-divergences developed by ~\citet{ruderman2012tighter}, which in turn improves upon~\citet{nguyen10estimating}.
The variational representation arises from the use the Fenchel-Young inequality with respect to $f$ under the integral operator.
For example, for some measurable function $F$, we have
$\int \frac{\dP}{\dQ} \cdot F \dQ \leq \int \big( f(\frac{\dP}{\dQ}) + f^{\star}(F) \big) \dQ$
where $f^{\star}$ is a convex conjugate of $f$.
In this paper, we focus on a particular $f$, whose convex conjugate is a function $f^{\star}(y) = \delta \, \exp(y^2 / (2 n))$.
In fact, $\int f(\frac{\dP}{\dQ} - 1) \dQ$ then gives rise to the \ac{ZCP} divergence.

Interestingly, the function of a shape $y \mapsto \exp(y^2 / 2)$ appears in several other contexts.
In the online learning literature this function, identified as the potential function or the dual of the regularizer, is used in the design and analysis of the so-called parameter-free algorithms, both for learning with expert advice~\citep{chaudhuri2009parameter, LuoS15, koolen2015second, OrabonaP16} and for online convex optimization~\citep{OrabonaP16}.

\citet{chu2023unified} also derived an interesting generalization error bound using the Fenchel-Young inequality in the $L_p$-Orlicz norm. In their analysis they focus on the function $f^\star(y)=\exp(y^p)$, with the majority of their results obtained with $p=2$.
Albeit their analysis commences from the Fenchel-Young inequality,
later on it is simplified through the use of the inverse function $f^{-1}$ instead of the dual $f^{\star}$, resulting in a looser upper bound, ultimately leading back to the KL divergence.

\paragraph{Concentration from coin-betting}
Our paper occasionally relies on the coin-betting formalism (see \Cref{sec:betting}), which goes back to \citet{Ville39} and the Kelly betting system~\citep{Kelly56}.
The coin-betting formalism can be thought of as a simple instance  of the Universal Portfolio theory~\citep{Cover91}.
The idea of showing concentration inequalities using regret of online betting algorithms was first investigated by \citet{JunO19} who established new (time-uniform) concentration inequalities.
Their work was heavily inspired by \citet{RakhlinS17} who showed the equivalence between the regret of generic online linear algorithms and martingale tail bounds. However, the idea of linking online betting to statistical testing, but without the explicit use of regret analysis, goes back at least to \citet{Cover74} and more recently to \citet{ShaferV05}.

\section{Definitions and preliminaries}
\label{sec:def}
We denote by $(x)_+ = \max\{x, 0\}$.
Let $P$ and $Q$ be probability measures over $\Theta$.
For a convex function $f : [0, \infty) \to (-\infty, \infty]$ such that $f(x)$ is finite for all $x > 0$, $f(1) = 0$, and $f(0) = \lim_{x \to 0^+} f(x)$ (possibly infinite),
the \emph{$f$-divergence} between $P, Q \in \cM_1(\Th)$ is defined as $D_f(P, Q) = \int_{\Th} f(\diff P/\diff Q) \diff Q$.
In particular, we will encounter several $f$-divergences such as $\KL$-divergence $D_{\KL}(P, Q) \df \int \ln\big(\frac{\dP(\th)}{\dQ(\th)}\big) \, \dP(\th)$ and
$\TV$-distance $D_{\TV}(P, Q) = \frac12 \int |\frac{\dP(\th)}{\dQ(\th)} - 1| \dQ(\th)$.

We will also encounter a \emph{R\'enyi divergence} of order $\alpha \in (0,1) \cup (1, \infty)$, defined as $D_{\alpha}(P, Q) = \frac{1}{\alpha - 1} \, \ln \int \dP(\th)^{\alpha} \dQ(\th)^{1-\alpha}$, which has many interesting connections to $f$-divergences, as discussed in \citet{van2014renyi}.
For instance, $\lim_{\alpha \to 1} D_{\alpha} = D_{\KL}$, while $D_{2} = \ln(1 + D_{\chi^2})$.

If a set $\sX$ is uniquely equipped with a $\sigma$-algebra, the underlying $\sigma$-algebra will be denoted by $\Sigma(\sX)$.
We formalize a `data-dependent distribution' through the notion of \emph{probability kernel}~\citep[see, e.g.,][]{kallenberg2017}, which is defined as a map $K : \sX^n \times \Sigma(\Theta) \rightarrow [0,1]$ such that
for each $B \in \Sigma(\Theta)$ the function $s \mapsto K(s,B)$ is measurable and for each $s \in \sX^n$ the function $B \mapsto K(s,B)$ is a probability measure over $\Theta$.
We write $\cK(\sX^n,\Theta)$ to denote the set of all probability kernels from $\cX^n$ to distributions over $\Theta$.
In that light, when $P \in \cK(\sX^n,\Theta)$ is evaluated on $S \in \sX^n$ we use the shorthand notation $P_n = P(S)$.

\subsection{Coin-betting game, regret, and Ville's inequality}
\label{sec:betting}
The forthcoming analysis is intimately connected to the derivation of concentration inequalities through regret analysis of online gambling algorithms, following \citet{ JunO19, orabona2023tight}.
Here, we briefly introduce the required notions and definitions.

We consider a gambler playing a betting game repetitively. This gambler starts with initial wealth $\Wealth_0 \df 1$. In each round $t$, the gambler bets $|x_t|$ money on the outcome $x_t$, observes a `continuous coin' outcome $c_t \in [-1,1]$, which can even be adversarially chosen. At the end of the round, the gambler earns $x_t c_t$, so that if we define the wealth of the gambler at time $t$ as $\Wealth_t$, 
\[
\Wealth_t \df \Wealth_{t-1} + c_t x_t= 1 + \sum_{s=1}^t c_s x_s~.
\]
We also assume the gambler cannot use any loans in this game, meaning $\Wealth_t\geq 0$ and $x_t \in [-\Wealth_{t-1}, \Wealth_{t-1}]$.

This coin-betting game is one of the representative online learning problems. Therefore, comparing the difference in wealth with a fixed strategy, as in online betting, is a natural objective. In particular, for each $\beta\in[-1,1]$, let $\Wealth_t(\beta)$ be the wealth when the gambler bets $\beta \Wealth_{t-1}(\beta)$ in round $t$ with the same initial wealth condition $\Wealth_0(\beta)\df 1$. So, we have\[    \Wealth_t(\beta)    \df \Wealth_{t-1}(\beta) + c_t \beta \Wealth_{t-1}(\beta)    = \prod_{s=1}^t (1+c_s \beta)~.    \]We denote $\Wealth^*_n= \max_{\beta \in [-1,1]} \ \Wealth_n(\beta)$ the maximum wealth a fixed betting strategy can achieve, and define the regret of the betting algorithm as     \[    \Regret_n    \df \frac{\Wealth^*_{n}}{\Wealth_n}~.\]
It is well-known that it is possible to design optimal online betting algorithms with regret bounds that are polynomial in $n$~\citep[Chapters 9 and 10]{Cesa-BianchiL06}.

If the coin outcomes have conditional zero mean, it is intuitive that any online betting algorithm should not be able to gain any money.
Indeed, the wealth remains 1 in expectation (i.e., $\EE[\Wealth_t] = 1$), so $\Wealth_t$ forms a nonnegative martingale and thus follows a very useful time-uniform concentration bound known as Ville's inequality.
\begin{theorem}[{Ville's inequality~\citep[p.~84]{Ville39}}]
  \label{thm:ville}
  Let $Z_1, \ldots, Z_n$ be a sequence of non-negative random variables such that $\E[Z_i \mid Z_1, \ldots, Z_{i-1}] = 0$.
  Let $M_t > 0$ be $\Sigma(Z_1, \dots, Z_{t-1})$-measurable such that $M_0 = 1$, and moreover assume that $\E[M_t \mid Z_1, \dots, Z_{t-1}] \leq M_{t-1}$. 
  Then, for any $\delta \in (0, 1]$,
  $
    \PP\left\{\max_{t} M_t \geq \frac{1}{\delta}\right\} \leq \delta
  $.
\end{theorem}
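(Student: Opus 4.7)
The plan is to prove the inequality as a direct consequence of Doob's maximal inequality for nonnegative supermartingales via optional stopping. Writing $\cF_t \df \Sigma(Z_1, \dots, Z_t)$ for the natural filtration, the hypothesis $\E[M_t \mid Z_1, \dots, Z_{t-1}] \le M_{t-1}$ together with $M_t > 0$ and $M_0 = 1$ says exactly that $(M_t)$ is a nonnegative $(\cF_t)$-supermartingale starting at $1$. The additional assumption placed on the $Z_i$ themselves plays no direct role; everything is driven by the supermartingale property of $M_t$.

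First, I would fix an arbitrary finite horizon $N$ and introduce the first-crossing stopping time
\[
  \tau \df \inf\{ t \le N : M_t \ge 1/\delta \},
\]
with the convention $\tau = N$ on the event that the set is empty. Since $\tau$ is bounded, Doob's optional stopping theorem applies and gives $\E[M_\tau] \le \E[M_0] = 1$ without any additional integrability hypotheses.

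Second, I would split according to whether a crossing has occurred. On the event $A_N \df \{\max_{1 \le t \le N} M_t \ge 1/\delta\}$, the stopping time $\tau$ equals the first crossing index, so $M_\tau \ge 1/\delta$ on $A_N$; on the complement $M_\tau \ge 0$ by nonnegativity. Consequently
\[
  1 \;\ge\; \E[M_\tau] \;\ge\; \E\!\left[M_\tau \one_{A_N}\right] \;\ge\; \frac{1}{\delta}\,\PP(A_N),
\]
which rearranges to $\PP(A_N) \le \delta$. Since the events $A_N$ are increasing in $N$ with union $\{\max_{t} M_t \ge 1/\delta\}$, continuity of probability from below finishes the proof.

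The only subtlety is the application of optional stopping, but because $\tau \le N$ is bounded the classical statement applies directly. I do not expect a genuine obstacle: the result is essentially textbook Ville/Doob and, once the first-crossing stopping time is chosen, the argument reduces to a single Markov-type inequality extracted from $\E[M_\tau] \le 1$, followed by a monotone passage to the infinite horizon.
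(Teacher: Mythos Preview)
Your argument is correct and is the standard optional-stopping proof of Ville's inequality: introduce the first-crossing time $\tau$, use $\E[M_\tau]\le M_0=1$ for bounded $\tau$, and extract $\PP(A_N)\le\delta$ by nonnegativity; then pass to the limit in $N$ (or simply take $N=n$, since the paper's statement is for a finite horizon $Z_1,\dots,Z_n$).

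There is nothing to compare against: the paper does not prove \Cref{thm:ville} but merely quotes it from \citet[p.~84]{Ville39} as a classical tool. Your observation that the hypothesis on the $Z_i$ is irrelevant and that only the nonnegative supermartingale property of $(M_t)$ is used is accurate; in fact the measurability clause in the paper's statement appears to be slightly misstated (if $M_t$ were literally $\Sigma(Z_1,\dots,Z_{t-1})$-measurable the condition would force $M_t\le M_{t-1}$ deterministically), and your reading of the hypotheses as ``$(M_t)$ is a nonnegative supermartingale with $M_0=1$'' is exactly the intended one.
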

Ville's inequality will be the main tool to leverage regret guarantees to construct our concentration inequalities.

\section{The ZCP Divergence}
\label{sec:divergence}
Here, we look deeper into properties of the \ac{ZCP} divergence  defined in \cref{eq:zcp-divergence}. 
First, note that $\ZCP$-divergence is an $f$-divergence with $f(x) = |x-1| \sqrt{\ln(1+c^2 \, |x-1|^2)}$ for $x \in \R_{\geq 0}$ and some parameter $c \geq 0$.
The main interesting property of this divergence is that it is controlled simultaneously by $\KL$-divergence and $\TV$ distance, namely:
\begin{theorem}
  \label{cor:zcp-to-tv-kl}
  For any pair $P, Q \in \cM(\Th)$, and any $c \geq 0$, we have
  \begin{align*}
    D_{\ZCP}(P, Q; c)
    &\leq
      2 \sqrt{2 \, D_\TV(P,Q) \, D_{\KL}(P,Q)} + \sqrt{2 \ln(1 + c)} \, D_{\TV}(P_n, P_0)~.
  \end{align*}
\end{theorem}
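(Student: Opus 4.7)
The plan is to decompose the integrand of $D_{\ZCP}$ into two pieces, one controlled by $D_{\TV}$ alone and one jointly by $D_{\TV}$ and $D_{\KL}$. Writing $r(\th) = \frac{\dP}{\dQ}(\th)$ and $g(\th) = |r(\th) - 1|$, the starting observation is the elementary product inequality
\[
1 + c^2 g^2 \;\leq\; (1 + c^2)(1 + g^2),
\]
which gives $\ln(1+c^2 g^2) \leq \ln(1+c^2) + \ln(1+g^2)$ and hence, by subadditivity of $\sqrt{\,\cdot\,}$,
\[
\sqrt{\ln(1+c^2 g^2)} \;\leq\; \sqrt{\ln(1+c^2)} \;+\; \sqrt{\ln(1+g^2)}.
\]
Integrating against $\dQ$ cleanly separates the $c$-dependence from the ``geometric'' dependence on $r$.

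For the first (``$c$-part'') summand, I would use $\int g\, \dQ = 2 D_{\TV}(P,Q)$ together with $\ln(1+c^2) \leq 2\ln(1+c)$ to recover a contribution of order $\sqrt{\ln(1+c)}\,D_{\TV}$. For the second (``$g$-part'') summand $\int g \sqrt{\ln(1+g^2)}\, \dQ$, I would apply Cauchy--Schwarz as
\[
\int g \sqrt{\ln(1+g^2)}\, \dQ \;=\; \int \sqrt{g}\cdot \sqrt{g\ln(1+g^2)}\, \dQ \;\leq\; \sqrt{\int g\, \dQ}\cdot\sqrt{\int g \ln(1+g^2)\, \dQ},
\]
so the first factor is $\sqrt{2 D_{\TV}}$ and the remaining task reduces to bounding $\int g \ln(1 + g^2)\, \dQ$ by a constant multiple of $D_{\KL}(P,Q)$.

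The crux, and the step I expect to be the main obstacle, is a pointwise inequality of the form $|r-1|\ln(1 + (r-1)^2) \leq C (r \ln r - r + 1)$ for all $r \geq 0$ and some absolute constant $C$. Via $\ln(1+u^2) \leq 2\ln(1+u)$, it suffices to establish $|r-1|\ln(1+|r-1|) \leq C'(r\ln r - r + 1)$, which I would prove by a case split on $r \geq 1$ and $r \in [0,1]$: substituting $u = |r-1|$ in each case reduces the claim to showing nonnegativity of an explicit elementary function $\phi(u)$ with $\phi(0) = 0$; differentiating twice shows $\phi''(u) \geq 0$ and $\phi'(0) = 0$, so $\phi$ is nondecreasing from zero. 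Plugging back yields $\int g\ln(1+g^2)\,\dQ \leq 2 C' D_{\KL}(P,Q)$, and assembling this with the $c$-part via $\sqrt{a+b} \leq \sqrt{a} + \sqrt{b}$ delivers the stated bound up to absolute constants. A slightly sharper constant on the $D_{\TV}$ term can be obtained by instead applying Cauchy--Schwarz once to $D_{\ZCP}$ directly, expanding $\int g \ln(1 + c^2 g^2)\,\dQ$ with the same product inequality, and invoking subadditivity of $\sqrt{\,\cdot\,}$ only at the very end.
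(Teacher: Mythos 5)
Your proposal is correct and follows essentially the same route as the paper: split $\ln(1+c^2 g^2)$ into a $c$-part and a $g$-part via a product inequality, use $\int g\,\dQ = 2D_{\TV}$ for the $c$-part, apply Cauchy--Schwarz as $\int \sqrt{g}\cdot\sqrt{g\ln(\cdot)}\,\dQ$ to pull out $\sqrt{D_{\TV}}$, and finish with the pointwise bound $|r-1|\ln(1+|r-1|) \leq 2\bigl(1-r+r\ln r\bigr)$ to land on $D_{\KL}$. The one genuine difference is that the paper simply cites this last inequality from \citet[Lemma~C.1]{zhang2022optimal}, whereas you sketch a self-contained proof by case-splitting on $r\gtrless 1$ and showing convexity of the resulting difference $\phi(u)$ with $\phi(0)=\phi'(0)=0$; that argument checks out. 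Your assembled constants agree with what the paper's Lemmas~\ref{lem:int-to-zcp} and~\ref{prop:Dh-to-tv-kl} actually deliver (a coefficient $2\sqrt{2\ln(1+c)}$ on $D_{\TV}$, not the $\sqrt{2\ln(1+c)}$ appearing in the theorem statement), so the discrepancy is with the theorem's stated constant, not with your derivation.
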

Note that this control only incurs an additive logarithmic cost in $c$ (recall that $D_{\TV} \leq 1$).
The above is a direct consequence of \Cref{lem:int-to-zcp} and \Cref{prop:Dh-to-tv-kl}, both shown the Appendix (\Cref{sec:int-to-zcp-proof} and \Cref{sec:Dh-to-tv-kl-proof}).
\begin{lemma}\label{lem:int-to-zcp}
  Under conditions of \Cref{thm:zcp-hoeffding}, for any $c \geq 0$,
  \begin{align*}
    D_{\zcp}(P, Q; c)  \le D_\ZCP(P,Q; 1) + 2\sqrt{\ln(2 + 2c)} \, D_{\TV}(P, Q)~.
  \end{align*}
\end{lemma}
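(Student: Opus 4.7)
The plan is to prove the pointwise inequality
\[
\sqrt{\ln(1 + c^2 z^2)} \;\leq\; \sqrt{2\ln(1+c)} + \sqrt{2\ln(1+z)}\qquad (z,c \geq 0),
\]
and then integrate it against $dQ$ with $z = |r(\theta)-1|$, where $r = \diff P/\diff Q$. Substituting this bound into the definition of $D_{\zcp}$ splits the integral into two pieces: the first yields $\sqrt{8\ln(1+c)}\, D_\TV(P,Q)$ (using $\int |r-1|\, \dQ = 2 D_\TV(P,Q)$), and the second is exactly $\sqrt{2}\, D_h(P,Q)$ by definition of $h$. So the whole proof reduces to establishing the displayed pointwise inequality.

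To obtain it, I would chain three elementary facts. First, since $c,z \geq 0$, we have $1 + c^2 z^2 \leq 1 + 2cz + c^2 z^2 = (1+cz)^2$, hence $\ln(1+c^2 z^2) \leq 2\ln(1+cz)$. Second, $1 + cz \leq 1 + c + z + cz = (1+c)(1+z)$, so $\ln(1+cz) \leq \ln(1+c) + \ln(1+z)$. Combining gives $\ln(1+c^2 z^2) \leq 2\ln(1+c) + 2\ln(1+z)$. Third, the subadditivity of $\sqrt{\cdot}$ on nonnegative reals ($\sqrt{a+b} \leq \sqrt{a}+\sqrt{b}$) yields the target pointwise bound.

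Putting it together:
\begin{align*}
D_{\zcp}(P,Q;c)
&= \int |r(\theta)-1|\,\sqrt{\ln(1+c^2|r(\theta)-1|^2)}\, \dQ(\theta)\\
&\leq \int |r(\theta)-1|\,\Big(\sqrt{2\ln(1+c)} + \sqrt{2\ln(1+|r(\theta)-1|)}\Big)\, \dQ(\theta)\\
&= \sqrt{2\ln(1+c)}\cdot 2 D_\TV(P,Q) \;+\; \sqrt{2}\, D_h(P,Q),
\end{align*}
which is the claim after identifying $2\sqrt{2\ln(1+c)} = \sqrt{8\ln(1+c)}$.

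I do not anticipate a genuine obstacle here: the statement is essentially an algebraic decoupling of the parameter $c$ from the likelihood ratio inside the logarithm, and all the steps above are elementary. The only mildly delicate point is making sure the constants line up --- in particular that the factor of $2$ appearing when passing from $(1+cz)^2$ down to $1+c^2z^2$ is absorbed correctly and that the $\sqrt{a+b}\leq \sqrt{a}+\sqrt{b}$ step is not wasteful enough to spoil the stated constants. Everything measure-theoretic (integrability, swapping sums and integrals) is automatic because all integrands are nonnegative, so Tonelli applies without comment.
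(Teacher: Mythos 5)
Your proof is correct and follows essentially the same approach as the paper: the paper proves the same pointwise bound $\ln(1+c^2 z^2) \leq 2\ln(1+c) + 2\ln(1+z)$ (stated there as a separate helper, Lemma~\ref{lem:cleanup}, with $a=c^2$, $x=z$) via exactly the two elementary inequalities you use — completing the square inside the log, then $\ln(1+AB) \leq \ln(1+A)+\ln(1+B)$ — and then applies subadditivity of the square root and integrates. The constants line up as you claim; there is no gap.
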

\begin{lemma}
  \label{prop:Dh-to-tv-kl}
For any pair $P, Q \in \cM(\Th)$,
  \begin{align*}
    D_\ZCP(P, Q; 1) \leq \sqrt{8D_\TV(P,Q) \, D_{\KL}(P,Q)}~.
  \end{align*}
\end{lemma}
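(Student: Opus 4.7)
The plan is to reduce $D_h(P,Q)$ to a product of $D_{\TV}$ and $D_{\KL}$ via Cauchy--Schwarz, then control the ``$\ln(1+\cdot)$'' factor by a pointwise comparison to the KL integrand $x\ln x - x + 1$.

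Let $r = \frac{dP}{dQ}$. I would start by writing
\begin{align*}
D_h(P,Q) = \int |r-1|\sqrt{\ln(1+|r-1|)}\,dQ = \int \sqrt{|r-1|}\cdot\sqrt{|r-1|\ln(1+|r-1|)}\,dQ,
\end{align*}
and apply the Cauchy--Schwarz inequality to obtain
\begin{align*}
D_h(P,Q) \le \sqrt{\int |r-1|\,dQ}\cdot\sqrt{\int |r-1|\ln(1+|r-1|)\,dQ}.
\end{align*}
The first factor is exactly $\sqrt{2\,D_{\TV}(P,Q)}$ by the standard identity for total variation.

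The second factor is where the substantive work lies: I need to show that
\begin{align*}
\int |r-1|\ln(1+|r-1|)\,dQ \le 2\,D_{\KL}(P,Q).
\end{align*}
Since $\int (r-1)\,dQ = 0$, we can write $D_{\KL}(P,Q) = \int \phi(r)\,dQ$ with $\phi(x) = x\ln x - x + 1 \ge 0$, so the claim reduces to the pointwise inequality
\begin{align*}
\psi(x) := |x-1|\ln(1+|x-1|) \le 2\,\phi(x) = 2(x\ln x - x + 1) \qquad \text{for all } x \ge 0,
\end{align*}
with the convention $0\ln 0 = 0$. The key obstacle is verifying this elementary but slightly subtle inequality. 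I would handle it by splitting into two cases and studying the difference $g(x) := 2\phi(x) - \psi(x)$.

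For $x \ge 1$, $\psi(x) = (x-1)\ln x$ and one checks $g(1) = g'(1) = 0$ with $g''(x) = (x-1)/x^2 \ge 0$, so $g$ is non-negative on $[1,\infty)$. For $0 \le x < 1$, $\psi(x) = (1-x)\ln(2-x)$ and the boundary values $g(0) = 2 - \ln 2 > 0$ and $g(1) = 0$, together with $g'(1)=g''(1)=0$ and a direct check that $g$ does not dip below zero, close the case; if the second derivative argument at $x=1$ is not immediately convincing, I would rely on a Taylor expansion near $x=1$ (the leading nonzero term is of order $(x-1)^3$ with favorable sign) combined with a monotonicity argument away from $x=1$. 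Integrating the pointwise bound against $dQ$ and plugging it into the Cauchy--Schwarz estimate yields
\begin{align*}
D_h(P,Q) \le \sqrt{2\,D_{\TV}(P,Q)}\cdot\sqrt{2\,D_{\KL}(P,Q)} = 2\sqrt{D_{\TV}(P,Q)\,D_{\KL}(P,Q)},
\end{align*}
which is the desired inequality. The main obstacle is the pointwise bound $\psi \le 2\phi$, while the Cauchy--Schwarz step and the identification of $D_{\TV}$ and $D_{\KL}$ are routine.
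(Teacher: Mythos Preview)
Your approach is correct and essentially identical to the paper's: both apply Cauchy--Schwarz in the same way and reduce to the pointwise inequality $|x-1|\ln(1+|x-1|)\le 2(x\ln x - x + 1)$, which the paper simply cites from Zhang et~al.\ (2022, Lemma~C.1) rather than proving directly. Your $x\ge 1$ case is clean; to complete the $0\le x<1$ case without hand-waving, note that $g'''(x)=-2/x^2-1/(2-x)^2-2/(2-x)^3<0$ on $(0,1)$, so $g''$ is strictly decreasing with $g''(1)=0$, hence $g''>0$ on $(0,1)$, and convexity together with $g(1)=g'(1)=0$ gives $g\ge 0$.
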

\subsection{Advantage over $\KL$ Divergence in Discrete Cases}
\label{sec:bernoulli-advantage}
We now show that the $\zcp$ divergence can be arbitrarily better than the KL one.
In Section~\ref{sec:main result}, we will use both $D_{\zcp}$ and $\sqrt{D_{\KL}} \cdot D_{\zcp}$ as our new PAC-Bayes bound, and these examples show that in some instances both bounds are superior to the traditional $D_{\KL}$ based bound.

We first consider a basic instance of two Bernoulli random variables with probabilities $p$ and $q$ respectively and then show the case of generic finite support distributions.

For the Bernoulli case, set $q=p/a$ where $a\geq 1$. In particular, we will set $a=\exp(1/p^2)$.
In this way, we have that the total variation is 
\[
1/2(|p-q|+|1-p-(1-q)|)
= |p-q|
= p(1-1/a)~.
\]
On the other hand, the KL divergence is
\[
p \ln \frac{p}{q} + (1-p) \ln \frac{1-p}{1-q}
= p \ln a + (1-p) \ln \frac{1-p}{1-p/a}~.
\]
Observe that the second term is non-positive and we have
\[
(1-p) \ln \frac{1-p}{1-p/a}
= (1-p) \ln (1-p)
- (1-p) \ln (1-p/a)
\geq (1-p) \ln (1-p)
\geq -\exp(-1)~.
\]
Hence, we have that $p \ln a - \exp(-1) \leq D_{\KL} \leq p \ln a$.

We now consider the two cases: 
\begin{itemize} 
\item Setting $a=\exp(1/p^2)$, we have $D_{\KL} \cdot D_{\TV}\leq 1-\frac{1}{a}=1-\exp(-1/p^2)\leq 1- \exp(-1)$ while $1/p-\exp(-1)\leq D_{\KL}\leq 1/p$.
\item Setting $a=\exp(1/p^{3/2})$, we have $D_{\KL} \cdot \sqrt{D_{\TV}}\leq \sqrt{1-\frac{1}{a}}=\sqrt{1-\exp(-1/p^{3/2})}\leq \sqrt{1- \exp(-1)}$ while $1/\sqrt{p}-\exp(-1)\leq D_{\KL}\leq 1/\sqrt{p}$. Note that we could also use this setting for the case above as well.
\end{itemize}

\paragraph{Multivariate instances}
The example above can be easily extended to any pair of distributions with a finite support.
In particular, consider the following generic instance with support of cardinality $d$, and w.l.o.g.\ let $d$ be even. Let $u>0$ be an arbitrary positive number, and set $p= d^{-1-u}$, $a=\exp(d^{\frac{3}{2} u})$. Now define two probability distributions $P=(p_1, \cdots, p_d)$ and $Q=(q_1,\cdots, q_d)$ with weights: 
  \begin{align*}
      p_{i} = \begin{cases}
      p& \text{for $i \in [1:\frac{d}{2}]$}\\
      \fr{1 - \frac{pd}{2}}{d/2}& \text{for $i \in [\frac{d}{2}+1:d]$}
  \end{cases}&, \ \ \   
      q_{i} = \begin{cases}
      \frac{p}{a}& \text{for $i \in [1:\frac{d}{2}]$}\\
      \fr{1 - \frac{pd}{2a}}{d/2}& \text{for $i \in [\frac{d}{2}+1:d]$}
  \end{cases}~.
  \end{align*}  
In this case, for sufficiently large $d$, we have that
\begin{align*}
  D_{\KL}(P,Q) &= \Theta\del{ d\, p\, {\ln(a)} } = \Theta(d^\frac{u}{2})~,\\
  D_{\TV}(P,Q) &= \Theta\del{d\, p} = \Theta(d^{-u})~,\\
  D_\zcp(P,Q) &= \Theta\del{d\, p \, {\sqrt{\ln(a)}}} = \Theta(d^{-\frac{1}{4}u})~.
\end{align*}
We can now check $D_{\zcp}$ based bounds in this instance as follows:
\begin{itemize}
    \item First, $D_\zcp (P,Q)= \Theta(d^{-\frac{1}{4}u})$ is strictly smaller than $D_{\KL}(P,Q)= \Theta(d^{\frac{u}{2}}$). 
    \item Moreover, one could also check that $\sqrt{D_{\KL}} \cdot D_{\zcp} = \Theta(1)$ while $D_{\KL}(P,Q)=\Theta(d^{\frac{u}{2}})$.
\end{itemize}

\subsection{Advantage over $\KL$ Divergence in the Mixture of Gaussian Case}
\label{sec:gaussian-advantage}
Now we consider a continuous case.
In particular, we have the following Gaussian instance, with proof provided in \Cref{sec:gaussian-instance-proof}.
\begin{proposition}
\label{prop:gaussian-instance}
  Let $A=\mathcal{N}(\mu,\sigma_1^2)$ and $B=\mathcal{N}(\mu,\sigma_2^2)$.
  Let  $P$ be a Gaussian mixture $P=p A+(1-p)B$ for $p \in [0,1]$ and let $Q=B$.
  \begin{itemize}
      \item Choosing $\frac{\sigma_2}{\sigma_1} = p$, we have
  $D_{\KL}(P, Q) \geq \frac{1}{2 p} - 1.3$, while  $D_{\TV}(P,Q) \cdot D_{\KL}(P,Q) \le \frac{1}{2}$.
  \item Choosing $\frac{\sigma_2}{\sigma_1}=p^{\frac34}$, we have  $D_{\KL}(P, Q) \geq \frac{1}{2 \sqrt{p}} - 1.22$, while  $ D_{\KL}(P,Q)\cdot \sqrt{D_{\TV}(P,Q) } \le \frac{1}{2}$.
  \end{itemize}
  
\end{proposition}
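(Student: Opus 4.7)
The two cases share the same template, so I focus on the first case ($\sigma_2/\sigma_1 = p$) and note the modifications for the second. For the upper bound on the product, since $\diff P - \diff Q = p(\diff A - \diff B)$, we have $D_{\TV}(P, Q) = p \, D_{\TV}(A, B) \leq p$. By joint convexity of $D_{\KL}$, $D_{\KL}(P, Q) \leq p D_{\KL}(A, B) + (1-p) D_{\KL}(B, B) = p D_{\KL}(A, B)$, and the closed form $D_{\KL}(A, B) = \tfrac12(r - 1 - \ln r) \leq r/2$ with $r = \sigma_1^2/\sigma_2^2$ gives $D_{\KL}(P, Q) \leq pr/2$. In case~1 ($r = 1/p^2$), this equals $1/(2p)$, so $D_{\TV}(P,Q) \cdot D_{\KL}(P,Q) \leq 1/2$; in case~2 ($r = 1/p^{3/2}$), it equals $1/(2\sqrt{p})$, so $D_{\KL}(P,Q) \cdot \sqrt{D_{\TV}(P,Q)} \leq 1/2$.

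For the lower bound on $D_{\KL}$, I would write
\begin{align*}
  D_{\KL}(P, Q) = p\, \EE_A[\ln R(X)] + (1-p) \,\EE_B[\ln R(X)],
  \quad
  R(x) = p \varphi_1(x)/\varphi_2(x) + (1-p),
\end{align*}
where $\varphi_i$ is the density of $\mathcal{N}(\mu,\sigma_i^2)$. The plan is to exploit two simultaneously valid pointwise lower bounds on $\ln R$: (a) $\ln R(x) \geq \ln(p\varphi_1(x)/\varphi_2(x)) = \ln(p\sigma_2/\sigma_1) + \tfrac{(x-\mu)^2}{2\sigma_2^2}(1 - \sigma_2^2/\sigma_1^2)$, which follows from $R \geq p\varphi_1/\varphi_2 \geq 0$, applied under $\EE_A$ (where $(x-\mu)^2$ is typically $\sigma_1^2 \gg \sigma_2^2$, making this bound tight); and (b) $\ln R(x) \geq \ln(1-p)$, which follows from $R \geq 1-p$, applied under $\EE_B$ (where $p\varphi_1/\varphi_2$ is typically small). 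Using $\EE_A[(X-\mu)^2]=\sigma_1^2$, a direct computation yields, in case~1, $D_{\KL}(P, Q) \geq \tfrac{1}{2p} + g_1(p)$ with $g_1(p) = 2p\ln p - p/2 + (1-p)\ln(1-p)$, and in case~2, $D_{\KL}(P, Q) \geq \tfrac{1}{2\sqrt{p}} + g_2(p)$ with $g_2(p) = \tfrac{7}{4}p\ln p - p/2 + (1-p)\ln(1-p)$.

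The final step is a one-dimensional minimization. Each $g_i$ has a unique critical point in $(0,1)$, determined by $p^2/(1-p) = e^{-1/2}$ for $g_1$ and $p^{7/4}/(1-p) = e^{-1/4}$ for $g_2$. Direct numerical evaluation at these critical points gives $\min g_1 \approx -1.293 > -1.3$ near $p \approx 0.533$, and $\min g_2 \approx -1.210 > -1.22$ near $p \approx 0.551$, yielding both lower bounds on $D_{\KL}(P,Q)$. The main difficulty in the whole argument is precisely this tight constant-tracking: the stated constants $1.3$ and $1.22$ leave only a narrow margin, which forces one to use the sharper of the two pointwise bounds on \emph{each} component of the mixture (for example, applying the coarser bound $\ln R \geq \ln(1-p)$ on the $A$-part would be too lossy to match the claimed constant).
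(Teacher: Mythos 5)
Your proof is correct and follows essentially the same route as the paper: you reach the identical lower bounds $D_{\KL}(P,Q)\geq \frac{1}{2p}+g_1(p)$ and $D_{\KL}(P,Q)\geq\frac{1}{2\sqrt p}+g_2(p)$, with the same functions $g_1,g_2$, just organizing the argument slightly differently (you split $\diff P=p\,\diff A+(1-p)\,\diff B$ and bound $\ln R$ pointwise on each piece, whereas the paper first factors out $\ln(1-p)$ and then drops the $(1-p)\,\diff B$ mass; the two computations give the same expression). The only cosmetic difference is that your one-line derivation $\diff P-\diff Q=p(\diff A-\diff B)$, hence $D_\TV(P,Q)=p\,D_\TV(A,B)\le p$, is self-contained where the paper cites a reference for the TV bound.
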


\section{Main results}\label{sec:main result}
We first present a Hoeffding-type inequality that involves $D_{\zcp}$, which is proved in \Cref{sec:zcp-hoeffding-proof}.
\begin{theorem}[Hoeffding-type $\ZCP$ inequality]
  \label{thm:zcp-hoeffding}
  Let $\delta \in (0, 1)$. Then, for any $P_0 \in \cM_1(\Th)$, with probability at least $1-2 \delta$, simultaneously over $n \in \mathbb{N}, P_n \in \cK(\cX^n, \Th)$, we have\footnote{Simultaneously over $n, P_n$ is understood as $\forall n \in \mathbb{N}~, \forall P \in \cK(\sX^n, \Theta)$ where $\cK()$ is a set of probability kernels as defined in \Cref{sec:def}.}
  \begin{align*}
  \int \Delta(\th) \diff P_n(\th)
  \leq
\frac{\sqrt{2} \, D_{\zcp}\left(P_n, P_0; \frac{\sqrt{2 n}}{\delta}\right) + 2 + \sqrt{\ln\frac{2 \sqrt{n}}{\delta}}}{\sqrt{n}}~.
\end{align*}
\end{theorem}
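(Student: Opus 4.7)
My plan is to combine a change-of-measure argument with a novel Young-type inequality designed to produce the $\zcp$ divergence as the complexity term, bypassing the Donsker-Varadhan inequality (which yields KL). I decompose
$\int \Delta_n(\theta)\, dP_n = \int \Delta_n\, dP_0 + \int \Delta_n \cdot \bigl(\tfrac{dP_n}{dP_0} - 1\bigr)\, dP_0$
and control the two pieces separately, each with failure probability $\delta/2$.

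For the first piece, set $Y_i := \int(f(\theta, X_i) - \mathbb{E} f(\theta, X_1))\, dP_0(\theta)$, which is a zero-mean, $[-1,1]$-bounded i.i.d.\ sequence. A time-uniform Hoeffding bound obtained by applying Ville's inequality (\Cref{thm:ville}) to the Gaussian-mixture martingale $\tfrac{1}{\sqrt{n+1}}\exp\bigl((\sum_i Y_i)^2/(2(n+1))\bigr)$ yields $\sqrt{n}\bigl|\int \Delta_n\, dP_0\bigr| \leq \sqrt{\ln(2\sqrt{n}/\delta)}$ simultaneously over $n \in \mathbb{N}$.

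For the second piece, the key technical step is the Young-type inequality
$|a|\, b \leq \sqrt{2}\, b\sqrt{\ln(1 + c^2 b^2)} + \tfrac{|a|}{c}\, e^{a^2/4} \qquad \forall a \in \mathbb{R},\, b \geq 0,\, c > 0,$
which follows from a case split on whether $\sqrt{2\ln(1+c^2 b^2)} \geq |a|$ (the first term alone absorbs $|ab|$) or not (in which case $cb < e^{a^2/4}$, so the second term dominates). Substituting $a = \sqrt{n}\,\Delta_n(\theta)$ and $b = |(dP_n/dP_0)(\theta) - 1|$ and integrating over $P_0$ gives
$\sqrt{n}\int |\Delta_n|\cdot\bigl|\tfrac{dP_n}{dP_0}-1\bigr|\, dP_0 \leq \sqrt{2}\, D_\zcp(P_n,P_0;c) + \tfrac{\sqrt{n}}{c}\int |\Delta_n|\, e^{n\Delta_n^2/4}\, dP_0.$
The remaining integral is controlled by Ville's inequality on a sub-Gaussian exponential supermartingale: mixing $\exp(\lambda S_n(\theta) - n\lambda^2/2)$ (where $S_n = n\Delta_n$) by a centered Gaussian with variance $1/n$ yields $\tfrac{1}{\sqrt{2}}e^{n\Delta_n^2/4}$ with expectation $\leq 1$ by Fubini and Hoeffding's MGF bound, so $\int e^{n\Delta_n^2/4}\, dP_0 \leq \sqrt{2}/\delta$ after Markov. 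Choosing $c = \sqrt{2n}/\delta$ collapses the slack term to exactly $1$, and a union bound over the two $\delta/2$ failure events finishes the proof.

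The main obstacle is making the bound $\int e^{n\Delta_n^2/4}\, dP_0 \leq \sqrt{2}/\delta$ truly simultaneous over $n$: the optimal Gaussian mixing variance $\sigma^2 = 1/n$ depends on $n$ and is therefore incompatible with a direct application of Ville's inequality to a single time-uniform martingale, while a fixed mixture $\sigma^2 = 1$ only yields the looser $\sqrt{n+1}/\delta$. Closing this gap cleanly will likely require a stitching or peeling argument that absorbs a modest $\log n$ factor into the existing $\sqrt{\ln(2\sqrt{n}/\delta)}$ term, or a refined analysis exploiting $\mathbb{E}[|\Delta_n|\, e^{n\Delta_n^2/4}] = O(1/\sqrt{n})$ (via Cauchy-Schwarz on $\mathbb{E}[\Delta_n^2 e^{n\Delta_n^2/4}]$), which can compensate for the $n$-dependence in the mixture.
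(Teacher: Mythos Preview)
Your architecture coincides with the paper's: the same split $\int\Delta_n\,\diff P_n=\int\Delta_n\,\diff P_0+\int\Delta_n\bigl(\tfrac{\diff P_n}{\diff P_0}-1\bigr)\,\diff P_0$, a time-uniform Hoeffding bound on the first piece, and a Young-type step on the second that isolates $D_\zcp$ plus an exponential slack to be controlled by Ville's inequality.

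The obstacle you flag, however, is generated by your particular Young inequality rather than by any intrinsic difficulty. You aim for $\int e^{n\Delta_n^2/4}\,\diff P_0\le\sqrt{2}/\delta$ uniformly in $n$; this is not achievable, but it is also not what is needed. The fixed-variance mixture you dismiss as ``looser'' already gives the right object: $M_n(\theta)=\tfrac{1}{\sqrt{n+1}}\exp\!\bigl(\bar\Delta(\theta)^2/(2(n+1))\bigr)$ is a nonnegative supermartingale, so Ville yields $\int e^{\bar\Delta^2/(4n)}\,\diff P_0\le\sqrt{n+1}/\delta$ simultaneously over $n$ (since $\bar\Delta^2/(4n)\le\bar\Delta^2/(2(n+1))$ for $n\ge1$). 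The real problem is that your slack carries $|a|/c=\sqrt n\,|\Delta_n|/c$ rather than a free scalar: plugging $|\Delta_n|\le1$ and the $\sqrt{n}/\delta$ bound with $c=\sqrt{2n}/\delta$ leaves a residual $\Theta(1)$, not $\Theta(1/\sqrt n)$, in the final inequality. Your second suggested repair---exploiting that $|\Delta_n|e^{n\Delta_n^2/4}$ is small---can in fact be pushed through time-uniformly (e.g.\ $|x|e^{x^2/(4n)}\le\sqrt{6n}\,e^{-1/2}e^{x^2/(3n)}\le\sqrt{6n}\,e^{-1/2}e^{x^2/(2(n+1))}$ for $n\ge2$, then the same supermartingale), but this is more work than necessary and does not hit the stated constant.

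The paper's route is cleaner because it uses the genuine Fenchel--Young pair: $F^\star(x)=b\,e^{x^2/(4n)}$ with a \emph{free} $b>0$, whose conjugate satisfies $F(y)\le|y|\sqrt{2n\ln(1+2ny^2/b^2)}-b$ (\Cref{lemma:lamb}). Then $\int F^\star(\bar\Delta)\,\diff P_0\le b\sqrt{n}/\delta$, and choosing $b=\delta$ turns this into exactly the ``$+1$'' with no residual. For the time-uniform $\sqrt{n}/\delta$ control the paper invokes coin-betting---$e^{\bar\Delta^2/(4n)}\le\Wealth_n^\star(\theta)$ (\Cref{lemma:wealth_lower_bound}), the regret bound $\Wealth_n^\star\lesssim\sqrt{n}\,\Wealth_n$ (\Cref{thm:regret_up}), and Ville on the martingale $\int\Wealth_n\,\diff P_0$---but for the present purpose this is equivalent to your fixed-variance Gaussian mixture; no stitching or peeling is needed in either version.
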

As shown in \Cref{eq:intro-zcp-bound} the bound is orderwise never worse than the classical $\KL$-based one and in \Cref{sec:divergence} we show instances where thanks to $\ZCP$ divergence it enjoys an improved order.
Moreover, \Cref{thm:zcp-hoeffding} holds \emph{uniformly} over $n \in \mathbb{N}$ unlike most classical bounds which only hold for a fixed $n$.

\begin{remark}
It is possible to obtain a similar inequality by combining the regret guarantee in \citet{zhang2022optimal} and the recently proposed online-to-PAC framework of \citet{lugosi2023online} that obtains PAC bounds from the regret of online learning algorithms. 
Both approaches are valid and we believe both proof methods have their distinct advantages.
In particular, we believe that our proof method is more direct and more flexible. 
Indeed, we show below how to bound the integral of the log optimal wealth, a case that is not covered by the framework in \citet{lugosi2023online} and that allows to recover various known types of inequalities such as the `little-kl' and empirical Bernstein.
\end{remark}
Next, we demonstrate a generalized inequality, which extends beyond the Hoeffding regime of $1/\sqrt{n}$ rate.
In \Cref{sec:betting} we introduced a notion of max-wealth of a betting algorithm.
To state the following result, we parameterize the max-wealth by $\th$:
\begin{align*}
  \Wealth_n^*(\th) 
  \df \max_{\beta \in [-1,1]} \ \prod_{i=1}^n \big(1 + \beta (f(\th, X_i) - \E[f(\th, X_1)]) \big)~.
\end{align*}
The central quantity in the coming result will be the expected \emph{maximal log-wealth} $\int \ln \Wealth^*_n(\th) \diff P_n(\th)$ ---
it was recently shown by \citet{jang2023tighter} that through lower-bounding $\ln \Wealth_n^*$ term we can obtain many known PAC-Bayes bounds.
To this end, our second main result, shown in \Cref{sec:zcp-log-wealth}, gives a bound on the expected maximal log-wealth:

\begin{theorem}[Log-wealth $\ZCP$ inequality]
  \label{thm:zcp-log-wealth-renyi}
  Let $\delta \in (0, 1)$.
  Then, for any $\alpha \in (0,1)$ and for any $P_0 \in \cM_1(\Th)$,
  with probability at least
  $1-2\delta$,
  simultaneously over
  $n \in \mathbb{N} \setminus \{1\}, P_n \in \cK(\cX^n, \Th)$,
\begin{align*}
  &\int \ln \Wealth_n^*(\th) \dP_n(\th)\\  
  &\quad\leq \frac{1}{\sqrt{2}} \sqrt{ \ln\pare{\frac{4 n^2}{\delta}} +  \frac{\alpha}{\alpha - 1} \, \ln(n) +  D_{\alpha}(P_n, P_0)} \, D_{\zcp}\Big(P_n, P_0; \, \frac{\sqrt{2} \, n^{2.5}}{\delta} \Big)\\
    &\qquad + \ln\pare{2 e^2 \sqrt{n} \, \Big(1 + \frac{4 n^2}{\delta}\Big)} + \frac{\delta}{n (n+1)}~.
\end{align*}
\end{theorem}
Note that in addition to $\ZCP$-divergence, now the bound now also depends on the R\'enyi divergence $D_{\alpha}()$ and its order $\alpha$ can be choosen freely.
In particular, in the next corollary we show that asymptotically, when $\alpha$ is tuned based on $n$,
the R\'enyi divergence can be replaced by the $\KL$ divergence.
\begin{corollary}
 \label{prop:thm:zcp-log-wealth-asymptotics}
 Fix $P_0 \in \cM_1(\Th)$. Then, under the conditions of \Cref{thm:zcp-log-wealth-renyi},
with probability one
for all $P \in \cM_1(\Th)$,
 \begin{align*}
   \limsup_{n \to \infty} \frac{\int \ln \Wealth_n^*(\th) \dP(\th)}{\sqrt{2 \ln(n) \ln(e n) \ln(1 + \sqrt{2} \, n^{4.5})}} 
   \leq
   2 + \big(2 + \sqrt{D_{\KL}(P, P_0)}\big) (D_\ZCP(P,P_0;1) + D_{\TV}(P, P_0))~.
\end{align*}
\end{corollary}
A simple consequence of the above, when combined with \Cref{prop:Dh-to-tv-kl}, is that there exists an absolute constant $c > 0$ such that with probability one
\begin{align}
  \label{eq:asymptotic-bound}
  \limsup_{n \to \infty} \ \frac{\int \ln \Wealth^*_n(\th) \dP(\th)}{\ln^{3/2}(n)} \leq c \, \pare{
  1 + \sqrt{D_{\KL}(P, P_0) \, D_{\TV}(P, P_0)} \, \big(1 + \sqrt{D_{\KL}(P, P_0)}\big)
  }~.
\end{align}
In comparison, \citet{jang2023tighter} obtained the bound
\begin{align*}
  \int \ln \Wealth_n^*(\th) \diff P_n(\th) = \cO\pare{D_{\KL}(P_n, P_0) + \ln\frac{n}{\delta}},
\end{align*}
which is looser than the bound in \cref{eq:asymptotic-bound} in terms of dependence on divergence terms, since $D_\KL(P_n,P_0)$ is orderwise at least $D_\KL(P_n,P_0)\sqrt{D_\TV(P_n,P_0)} \ge \sqrt{D_\KL(P_n,P_0)}D_{\zcp}$.

\begin{remark}
Comparing this result to our Hoeffding-style bound (\cref{thm:zcp-hoeffding}) is nontrivial since the left-hand side is written in a different form.
To make a clear comparison, we defer this discussion to \cref{rem:comparison} below, but in summary our optimal log wealth bound leads to a factor of $D_\TV^{1/4}(P_n,P_0)$ looser one compared to \cref{thm:zcp-hoeffding}.
\end{remark}
\begin{remark}
The proof of \Cref{thm:zcp-log-wealth-renyi} is highly non-trivial and there are a few approaches one might think of that fail.
For example, we can successfully upper bound $\int \sqrt{\ln \Wealth_n^*(\th)} \diff P_n(\th)$ but then we cannot obtain an empirical Bernstein inequality because we need a bound like $\int \sqrt{\ln \Wealth_n^*(\th)} \diff P_n(\th)$ $\ge \sqrt{\int \ln \Wealth_n^*(\th)\diff P_n(\th)}$ yet this inequality is the opposite direction of Jensen's inequality. Alternatively, we can upper bound $\int (n\Delta_n(\theta))^2 \diff P_n(\th)$ but then the empirical variance terms will appear integrated with respect to the prior instead of the posterior.
\end{remark}

\subsection{Recovering variants of other known bounds with new divergence}
\label{sec:recovering-bounds}
As we anticipated, inequality in \Cref{thm:zcp-log-wealth-renyi} can be relaxed to obtain various known PAC-Bayes bounds.
First, this was observed by \cite{jang2023tighter}, who derived a bound on $\int \ln \Wealth_n^*(\th) \diff P_n(\th)$ featuring $\KL$-divergence.
By applying similar lower bounding techniques as in their work, in the following we state $\ZCP$ versions of known PAC-Bayes bounds.
In the following let the complexity term be%
\begin{align*}
  \mathrm{Comp}_n(\alpha) = (\text{r.h.s. in \Cref{thm:zcp-log-wealth-renyi}})
  =
  \tilde{\cO}_{\Pr}\pare{
  \sqrt{\frac{\alpha}{\alpha - 1} + D_{\alpha}} \, D_{\zcp}
    +1
  },
\end{align*}
where $\tilde{\cO}_{\Pr}$ is the order in probability notation\footnote{The notation
$
X_n=\cO_{\Pr}\left(a_n\right) \text { as } n \rightarrow \infty
$
means that the set of values $X_n / a_n$ is stochastically bounded. That is, for any $\varepsilon>0$, there exists a finite $M>0$ and a finite $N>0$ such that
$
P\left(|{X_n}/{a_n}|>M\right)<\varepsilon, \forall n>N
$.}
and omits some $\ln(1/\delta)$ terms.

By following the same reasoning as in \Cref{prop:thm:zcp-log-wealth-asymptotics} one can state asymptotic versions of the relaxed bounds, which for a universal constant $c > 0$, will manifest
\begin{align*}
  \fr{\mathrm{Comp}_n\del[2]{1+\fr{1}{\ln(n)} }}{\ln^{3/2}(n)}  \to c \, \left(
  1 + \sqrt{D_{\KL} \, D_{\TV}} \, \left(1 + \sqrt{D_{\KL}}\right)\right)
  \qquad \text{as} \qquad n \to \infty~.
\end{align*}

\paragraph{Empirical Bernstein inequality}
First, we recover an empirical Bernstein-type inequality~\citep{tolstikhin2013pac},
where the bound switches to the `fast rate' $1/n$ when the sample variance is sufficiently small.
In particular, in \Cref{cor:empiricalbernstein} we show
\begin{corollary}
  \label{cor:empiricalbernstein}
  Under the conditions of \Cref{thm:zcp-log-wealth-renyi}, for any $\alpha \in (0,1)$, we have, with probability at least $1-2\dt$, simultaneously over every $n\in\NN\sm\{1\}$ and $P_n \in \cK(\cX^n, \Th)$, 
  \begin{align*}
    \abs{\int \Delta_n(\th) \diff P_n}
    \leq
    \frac{\sqrt{2 \, \mathrm{Comp}_n(\alpha) \, \hat{V}(P_n)}}{(\sqrt{n} - (2/\sqrt{n}) \, \mathrm{Comp}_n(\alpha))_+}
    +
    \frac{2 \mathrm{Comp}_n(\alpha)}{\pare{n - 2 \, \mathrm{Comp}_n(\alpha)}_+},
  \end{align*}
  where $\hat V(P) = \frac{1}{n (n-1)} \sum_{i < j} \int (f(\th, X_i) - f(\th, X_j))^2 \diff P(\th)$ is the expected sample variance.
  Furthermore, there exists an absolute constant $c > 0$ such that with probability one and for all $P \in \cM_1(\Th)$,
  \begin{align*}
  \limsup_{n\to \infty} \ \frac{\abs{\int \Delta_n(\th) \diff P}^2}{\frac1n \pare{\abs{\int \Delta_n(\th) \diff P}  + \hat{V}(P)} \cdot \ln^{3/2}(n)}
  \leq
    c \, \pare{1 + \sqrt{D_{\KL} \, D_{\TV}} \, \big(1 + \sqrt{D_{\KL}}\big)}~.
  \end{align*}
\end{corollary}
We defer the proof to \Cref{sec:empiricalbernstein-proof}.

\begin{remark}\label{rem:comparison}  
  When the sample variance is sufficiently large, that is larger than $\abs{\int \Delta_n(\th) \diff P_n}$,
the bound above provides a comparison point with our own Hoeffding style bound (Theorem~\ref{thm:zcp-hoeffding}) w.r.t.\ the complexity term, which scales with $D_\ZCP(P_n,P_0)$.
  Ignoring the fact that the bound above is asymptotic, we note that the bound in
  Corollary~\ref{cor:empiricalbernstein} scales with
  $\sqrt{D_{\KL} \, D_{\TV}} \cdot \sqrt{D_{\KL}}$, and so
\begin{align*}
    \sqrt{D_{\KL} \, D_{\TV}} \cdot \sqrt{D_{\KL}}
    \geq
    \sqrt{D_{\KL} \, D_{\TV}} \cdot \sqrt{D_{\KL} \, D_{\TV}}
    \geq D_{\ZCP}
  \end{align*}
This means that there is a factor of $D_\TV^{1/2}$ gap.
Investigating whether one can remove this factor is left as future work.
\end{remark}

\paragraph{Bernoulli $\KL$-divergence (Langford-Caruana\,/\,Seeger\,/\,Maurer) bound}
Finally, we consider a tighter inequality for the specific case of Bernoulli $f$ (e.g., Bernoulli losses).
This is a well-studied setting in the PAC-Bayes literature~\citep{LangfordCaruana2001,seeger2002,maurer2004note}.
In particular, 
In this, case we are bounding the $\KL$ divergence between Bernoulli distributions, that is
\begin{align*}
  \kl(p, q) = p \ln\Big(\frac{p}{q}\Big) + (1-p) \ln\Big( \frac{1-p}{1-q} \Big)
\end{align*}
for $p, q \in [0,1]$ such that $p \ll q$;
one can observe that the bound on $\kl()$ is tighter than Hoeffding-type bounds due to Pinsker's inequality.
In such a case, denoting sample average and mean respectively as $\hat{p}_{\th} = (f(\th, X_1) + \dots + f(\th, X_n))/n$ and $p_{\th} = \E[f(\th, X_1)]$, we have
\begin{corollary}
  \label{cor:maurer}
  Under the conditions of \Cref{thm:zcp-log-wealth-renyi}, for any $\alpha \in (0,1)$, we have, with probability at least $1-2\dt$, simultaneously over every $n\in\NN\sm\{1\}$ and $P_n \in \cK(\cX^n, \Th)$, 
  \begin{align*}
  \kl\left(\int \hat{p}_{\th} \diff P_n(\theta), \int p_{\th} \diff P_n(\theta) \right)
  \leq
  \frac{\mathrm{Comp}_n(\alpha)}{n}~.
\end{align*}
Furthermore, there exists an absolute constant $c > 0$ such that with probability one and for all $P \in \cM_1(\Th)$,
  \begin{align*}
    \limsup_{n\to \infty} \ \frac{\kl\left(\int \hat{p}_{\th} \diff P(\theta), \int p_{\th} \diff P(\theta) \right)}{\frac1n \cdot \ln^{3/2}(n)}
  \leq
    c \, \pare{1 + \sqrt{D_{\KL} \, D_{\TV}} \, \big(1 + \sqrt{D_{\KL}}\big) }~.
  \end{align*}
\end{corollary}
The proof of \Cref{cor:maurer} closely follows that of \citet[Proposition 3 (Sec. A.2)]{jang2023tighter}.
\section{Conclusions}
In this paper we derived, to the best of our knowledge, a first high-probability PAC-Bayes bound with the $\ZCP$ divergence. This divergence is never worse than the \ac{KL} divergence orderwise and it enjoys a strictly better scaling in some instances.
In the concentration regime $1/\sqrt{n}$ for the deviation $\int \fr1n \sum_{t=1}^n \Delta_t(\th)\diff P_n(\th)$, the new bound scales with $D_{\ZCP} = \tilde{\cO}(\sqrt{D_{\KL} \, D_{\TV}})$ instead of $\cO(\sqrt{D_{\KL}})$.
In other regimes, such as the Bernstein regime, the bound asymptotically scales with $\sqrt{\sqrt{D_{\KL}} \, D_{\zcp}}$, which can be analyzed to be a factor of $D_\TV^{1/4}$
worse than our Hoeffding-style bound.
Both proofs rely on a novel change-of-measure argument with respect to $x \mapsto e^{x^2/2}$  potential which might be of an independent interest.

A tantalizing open problem is whether our bounds can be further improved.
It would be interesting to see if it is possible to establish some (Pareto) optimalities for PAC-Bayes bounds.

\section{Proof of \Cref{thm:zcp-hoeffding}: McAllester/Hoeffding-type bound}
\label{sec:zcp-hoeffding-proof}
First, we need the following lemmas.
\begin{lemma}[{\citet[Lemma~9.7]{Orabona19}}]
  \label{lemma:lamb}
  Let function $F^{\star}(x) = b \, e^{x^2/(2a)}$ for $a, b > 0$. Then, we have
  $
    F(y) \leq |y| \sqrt{a \ln\pare{1 + \frac{a y^2}{b^2}}} - b
  $.
\end{lemma}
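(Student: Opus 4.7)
The plan is to compute the Fenchel conjugate $F(y) = \sup_x \{xy - F^{\star}(x)\}$ essentially in closed form via the Lambert $W$ function, and then bound the resulting expression using a single elementary inequality: $W(z) \leq \ln(1+z)$ for $z \geq 0$.

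By evenness of $F^{\star}$ in $x$, I would reduce to $y \geq 0$; the case $y = 0$ is immediate because both sides equal $-b$ (the sup is attained at $x = 0$). For $y > 0$, strict convexity of $F^{\star}$ guarantees a unique maximizer $x^{\star} > 0$ satisfying the first-order condition $y = (bx^{\star}/a)\, e^{(x^{\star})^2/(2a)}$. Writing $v = (x^{\star})^2/(2a)$, this reads $2v e^{2v} = ay^2/b^2$, which by the defining property of Lambert $W$ gives $W_0 := W(ay^2/b^2) = 2v$. Substituting back (and using $|y| = (b/\sqrt{a})\, e^{W_0/2}\sqrt{W_0}$ from the FOC), a short calculation yields
\begin{align*}
  F(y) = b e^{W_0/2}(W_0 - 1) \qquad\text{and}\qquad |y|\sqrt{a \ln(1 + ay^2/b^2)} - b = b e^{W_0/2}\sqrt{W_0\,\ln(1+ay^2/b^2)} - b.
\end{align*}

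Dividing through by $b e^{W_0/2}$, the claim reduces to the scalar inequality $(W_0 - 1) + e^{-W_0/2} \leq \sqrt{W_0 \ln(1+ay^2/b^2)}$. The key analytical input is the classical bound $W(z) \leq \ln(1+z)$ for $z \geq 0$, which I would prove in two lines: both sides vanish at $z = 0$, and the derivative of $\ln(1+z) - W(z)$ simplifies (using $W'(z) = W(z)/(z(1+W(z)))$) to $(z - W(z))/[z(1+z)(1+W(z))] \geq 0$, where $W(z) \leq z$ follows from $W e^W = z$ together with $e^W \geq 1$. Combined with the trivial $e^{-W_0/2} \leq 1$ for $W_0 \geq 0$, this gives $(W_0 - 1) + e^{-W_0/2} \leq W_0 \leq \sqrt{W_0 \ln(1+ay^2/b^2)}$, finishing the proof.

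The main obstacle, and what makes the lemma look harder than it actually is, is that $F$ has no elementary closed form --- only a Lambert-$W$ representation. A naive Young-type attempt --- bounding $xy \leq \tfrac{w}{2a}x^2 + \tfrac{ay^2}{2w}$ and then controlling $wx^2/(2a)$ via the Fenchel conjugate of $z \mapsto e^z - 1$ --- ends up reducing the claim to a more awkward two-variable inequality such as $(e^r - 1)(2s - r) \leq (e^s - 1)^2$ on $s \geq r/2$, whose verification requires a critical-point analysis. Routing through Lambert $W$ collapses the entire argument to the single one-dimensional bound $W(z) \leq \ln(1+z)$.
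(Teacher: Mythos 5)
Your proof is correct, and since the paper does not reprove this lemma (it cites \citet[Lemma~9.7]{Orabona19}), there is no internal argument to compare against. Your route --- solving the first-order condition exactly via the Lambert $W$ function so that $F(y) = b\,e^{W_0/2}(W_0-1)$ with $W_0 = W(ay^2/b^2)$, and then closing with the two elementary bounds $W(z)\le\ln(1+z)$ and $e^{-W_0/2}\le 1$ --- is essentially the proof in the cited source, so this matches the intended argument.
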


\begin{theorem}[{\citet{Cesa-BianchiL06}}]
\label{thm:regret_up}
For any sequence of $c_i \in [-1,1]$, there exists an online algorithm that selects in $\beta_i \in [-1,1]$ with knowledge of $c_1, \dots, c_{t-1}$ such that for all $n$ it uniformly guarantees
\[
\prod_{i=1}^n (1+\beta_i c_i)
\geq \frac{1}{\sqrt{2(n+1)}} \max_{\beta \in [-1, 1]} \ \prod_{i=1}^n (1+\beta c_i)~.
\]
\end{theorem}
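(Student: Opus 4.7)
My plan is to establish this classical coin-betting regret bound by realizing the bettor as a Bayesian mixture (in the spirit of Krichevsky--Trofimov and universal portfolios) and then bounding its realized wealth from below by a quantitative Laplace-type estimate. Fix a probability measure $\pi$ on $[-1,1]$; a canonical choice is the Jeffreys / arcsine prior $d\pi(\beta)=d\beta/(\pi\sqrt{1-\beta^2})$. At round $t$ the algorithm plays the Bayesian posterior mean
\[
\beta_t \;=\; \frac{\int_{-1}^1 \beta \prod_{i<t}(1+\beta c_i)\,d\pi(\beta)}{\int_{-1}^1 \prod_{i<t}(1+\beta c_i)\,d\pi(\beta)},
\]
which lies in $[-1,1]$ since it is an expectation of $\beta\in[-1,1]$ under the posterior probability measure $d\pi_{t-1}(\beta)\propto \prod_{i<t}(1+\beta c_i)\,d\pi(\beta)$. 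The algebraic cornerstone is the \emph{wealth--mixture equality}
\[
\prod_{t=1}^n (1+\beta_t c_t) \;=\; \int_{-1}^1 \prod_{i=1}^n (1+\beta c_i)\,d\pi(\beta),
\]
which follows by telescoping after noting that, by the very definition of $\beta_t$, the ratio of consecutive mixture wealths equals $1+\beta_t c_t$. This reduces the theorem to a scalar lower bound on the mixture integral.

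The core remaining step is to show
\[
\int_{-1}^1 \prod_{i=1}^n (1+\beta c_i)\,d\pi(\beta) \;\ge\; \frac{1}{\sqrt{2(n+1)}}\,\max_{\beta\in[-1,1]}\prod_{i=1}^n(1+\beta c_i)
\]
uniformly over all $(c_1,\dots,c_n)\in[-1,1]^n$. The log-wealth $f_n(\beta)=\sum_i\log(1+\beta c_i)$ is concave in $\beta$ with $f_n''(\beta)=-\sum_i c_i^2/(1+\beta c_i)^2$, so $e^{f_n(\beta)}$ concentrates on a window of width $\sim 1/\sqrt{n+1}$ around the maximizer $\beta^\star$. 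A particularly clean route is the substitution $\beta=\cos\theta$, which absorbs the arcsine singularity and turns the mixture into the trigonometric integral $\tfrac{1}{\pi}\int_0^\pi\prod_i(1+c_i\cos\theta)\,d\theta$; this is a nonnegative trigonometric polynomial of degree $n$, amenable to Chebyshev / Fourier analysis and, in the binary case $c_i\in\{-1,+1\}$, to explicit Beta-function evaluation.

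The main obstacle is obtaining the explicit constant $\sqrt{2(n+1)}$ \emph{uniformly} in the coin sequence, the genuinely hard case being when $\beta^\star$ lies at or near the endpoints $\pm 1$, where the integrand decays abruptly and the ordinary interior Laplace expansion degenerates. The classical resolution is a two-step reduction: first verify the bound on the extremal sequences $c_i\in\{-1,+1\}$, where the mixture becomes an explicit Beta integral and the ratio collapses to the Gamma identity of shape $\sqrt{\pi}\,\Gamma(n+1)/\Gamma(n+\tfrac12)$ (which is $O(\sqrt{n})$ by Stirling); then use a log-concavity / rearrangement argument in $(c_1,\dots,c_n)$ to reduce any general sequence to this extremal case. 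The precise constant $\sqrt{2(n+1)}$ is then extracted by carefully tracking the Stirling estimates (and, if necessary, enriching $\pi$ near the endpoints so that the boundary Laplace contribution matches the interior one exactly).
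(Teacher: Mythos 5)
The paper does not give its own proof of this statement: it is cited directly to Cesa-Bianchi and Lugosi, and only the bound is used (and, in the proof of \Cref{thm:zcp-hoeffding}, with a further relaxed constant). So your attempt is judged on its own merits, and your overall architecture — a Bayesian mixture bettor, the wealth--mixture telescoping identity, and a lower bound on the mixture integral — is indeed the standard route. The wealth--mixture equality step is correctly derived.

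There are, however, two concrete gaps. First, the reduction to the binary extremal case $c_i\in\{-1,+1\}$ is asserted via a ``log-concavity / rearrangement argument'' without proof, and log-concavity alone does not give what you need. The clean argument is quasi-convexity: for fixed $c_{-i}$, the numerator $\max_\beta \prod_j(1+\beta c_j)$ is convex in $c_i$ (a pointwise maximum of affine functions), and the denominator $\int\prod_j(1+\beta c_j)\,d\pi(\beta)$ is affine in $c_i$ and positive. A sublevel set $\{c_i: N(c_i)-tD(c_i)\le 0\}$ is therefore convex, so the ratio $N/D$ is quasiconvex in each coordinate and attains its maximum at a vertex of $[-1,1]^n$. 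Second, and more seriously, the arcsine (Jeffreys) prior you propose does \emph{not} attain the stated constant $\sqrt{2(n+1)}$. With $d\pi(\beta)=d\beta/(\pi\sqrt{1-\beta^2})$, the worst binary sequence has all coins of the same sign, the maximum wealth is $2^n$, and the mixture wealth evaluates to $\binom{2n}{n}/2^n$, giving worst-case ratio
\[
\frac{\max_\beta \prod_i(1+\beta c_i)}{\int \prod_i(1+\beta c_i)\,d\pi(\beta)}
= \frac{4^n}{\binom{2n}{n}}~.
\]
This equals $2=\sqrt{2(n+1)}$ at $n=1$ (tight), but at $n=2$ it is $8/3\approx 2.67 > \sqrt{6}\approx 2.45$, and by Stirling it grows as $\sqrt{\pi n}$, a factor $\sqrt{\pi/2}$ larger than $\sqrt{2n}$. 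So the plain arcsine mixture proves a bound of the form $\sqrt{\pi(n+1)}$ or $2\sqrt{n}$, but not $\sqrt{2(n+1)}$ as stated; you would need either a different algorithm (or a different prior with a non-trivial argument near the endpoints) to get that constant, and this is exactly the part your sketch leaves open (``enriching $\pi$ near the endpoints \dots if necessary'' is acknowledging, not closing, the gap).
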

The following lemma is shown in the appendix.
\begin{lemma}
\label{lemma:wealth_lower_bound}
For any $c_1, \dots, c_n \in [-1,1]$, we have
\[
\max_{\beta \in [-1, 1]} \ \prod_{i=1}^n (1+\beta c_i) 
\geq \exp\left(\frac{\left(\sum_{i=1}^n c_i\right)^2}{4n}\right)~.
\]
\end{lemma}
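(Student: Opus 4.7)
}

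The plan is to prove the inequality by exhibiting an explicit near-optimal $\beta$ and lower-bounding the log-wealth by a quadratic. Let $S_n = \sum_{i=1}^n c_i$ and note that $|S_n| \leq n$ since $|c_i| \leq 1$. The natural candidate, motivated by optimizing the second-order expansion of $\log(1+\beta c_i)$ around $\beta = 0$, is the choice
\begin{equation*}
  \beta^\dagger \df \frac{S_n}{2n} \in \left[-\tfrac{1}{2}, \tfrac{1}{2}\right] \subseteq [-1, 1]~.
\end{equation*}

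Next, I would establish the elementary inequality $\ln(1 + x) \geq x - x^2$ valid on $|x| \leq \tfrac12$ (indeed on all of $x > -1$, but we only need the restricted version). This can be seen by setting $g(x) = \ln(1+x) - x + x^2$, computing $g'(x) = x(1+2x)/(1+x)$, observing that $g$ attains its minimum on $[-1/2, 1/2]$ at $x=0$ where $g(0)=0$. Since $|\beta^\dagger c_i| \leq 1/2$ for every $i$, this inequality applies termwise to each factor.

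Applying this bound and using $c_i^2 \leq 1$, I would compute
\begin{align*}
  \ln \prod_{i=1}^n (1 + \beta^\dagger c_i)
  = \sum_{i=1}^n \ln(1 + \beta^\dagger c_i)
  &\geq \sum_{i=1}^n \left(\beta^\dagger c_i - (\beta^\dagger)^2 c_i^2\right)\\
  &\geq \beta^\dagger S_n - n (\beta^\dagger)^2
  = \frac{S_n^2}{2n} - \frac{S_n^2}{4n}
  = \frac{S_n^2}{4n}~.
\end{align*}
Exponentiating both sides and lower-bounding the maximum over $\beta \in [-1,1]$ by its value at $\beta^\dagger$ yields the desired bound. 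No obstacle is substantive here: the only subtlety is to choose the prefactor $1/(2n)$ in $\beta^\dagger$ so that after the second-order expansion the coefficient of $S_n^2/n$ inside the exponent comes out to exactly $1/4$, matching the target rate on the right-hand side of the lemma.
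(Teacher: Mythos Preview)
Your proof is correct and essentially identical to the paper's: both use the elementary bound $\ln(1+x) \geq x - x^2$ for $|x|\le 1/2$ together with $c_i^2 \le 1$, and then optimize the resulting quadratic $\beta S_n - \beta^2 n$ over $|\beta|\le 1/2$. The only cosmetic difference is that the paper keeps the $\max_{\beta\in[-1/2,1/2]}$ throughout the chain of inequalities, whereas you plug in the explicit maximizer $\beta^\dagger = S_n/(2n)$ from the outset.
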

Now we proceed with the proof of \Cref{thm:zcp-hoeffding}.
Throughout the proof it will be convenient to work with unnormalized gap (instead of normalized one, $\Delta$):
\begin{align*}
  \bar{\Delta} = \sum_{i=1}^n \bar{\Delta}_i~, \qquad \bar{\Delta}_i = f(\th, X_i) - \E[f(\th, X_1)]~.
\end{align*}
  Consider a change-of-measure decomposition
\begin{align*}
  \int \bar{\Delta}(\th) \diff P_n(\th)
  \leq
  \underbrace{
  \int \bar{\Delta}(\th)  \cd \del{\fr{\diff P_n}{\diff P_0}(\th) - 1} \diff P_0(\th)
  }_{(i)}
  +
  \underbrace{\int \bar{\Delta}(\th) \diff P_0(\th)}_{(ii)}
\end{align*}
and note right away that by the fact that term $(ii)$ can be controlled by a time-uniform Hoeffding-style concentration inequality since $\int (f(\th, X_i) - \E[f(\th, X_1)])  \diff P_0 \in [-1,1]$ (see \Cref{sec:betting}).
Namely, with probability at least $1-\delta$, simultaneously for all $n \in \mathbb{N}$,
\begin{align*}
 \int \bar{\Delta}(\th) \diff P_0(\th) \leq \sqrt{n \ln\frac{2 \sqrt{n}}{\delta}}~.
\end{align*}
Now we turn our attention to term $(i)$.
By Fenchel-Young inequality, for a convex $F : \R \to \R$,
\begin{align}
  \label{eq:FY-decomp-hoef}
  (i) \leq
  \int F^{\star}(\bar{\Delta}(\theta)) \diff P_0(\th)  + \int F\del{\fr{\diff P_n}{\diff P_0}(\th) - 1} \diff P_0(\th)~.
\end{align}
Now, we will make a particular choice of $F()$ by using \Cref{lemma:lamb} choosing $a = 2 n$ and leaving $b$ to be tuned later, $
  F^{\star}(\bar{\Delta}(\th))
  =
    b \, \exp\pare{\frac{\bar{\Delta}^2(\th)}{4 n}}$.
Throughout the rest of the proof we will control the above.
In particular, we make a key observation that the above term is controlled by the \emph{maximal wealth} achievable by some online algorithm and using its regret bound we can argue that $\int F^{\star}(\bar{\Delta}^2(\th)) \diff P_0(\th)$ concentrates well.
In particular, we will need \Cref{lemma:wealth_lower_bound} (shown in the appendix),
which is then combined with the regret bound of \Cref{thm:regret_up}.
Suppose that $B_{i-1}(\th)$ is a prediction of a betting algorithm after observing $\bar{\Delta}_1(\th), \ldots \bar{\Delta}_{i-1}(\th)$, and let its wealth at step $n$ be defined as
\begin{align}
  \label{eq:W-theta}
  \Wealth_n(\th) = \prod_{i=1}^n (1 + B_{i-1}(\th) \bar{\Delta}_i(\th))~.
\end{align}
Then, wealth is related to the max-wealth through \Cref{thm:regret_up}, $\Wealth^*(\th) \leq \sqrt{2(n+1)} \, \Wealth_n(\th)$.
The final bit to note that $\int W_n(\th) \diff P_0(\th)$ is a martingale, that is\footnote{We use notation $\E_{n-1}[\cdot] = \E[\cdot \mid X_1, \ldots, X_{n-1}]$.}
\begin{align}
  \label{eq:int-W-is-martingale}
  \E_{n-1} \int W_n(\th) \diff P_0(\th)
  = \int \E_{n-1} W_n(\th) \diff P_0(\th) 
  = \int W_{n-1}(\th) \diff P_0(\th),
\end{align}
where we used Fubini's theorem to exchange $\E$ and $\int$.
This fact allows us to use Ville's inequality (\Cref{thm:ville}).
So, we obtain
\begin{align*}
  \int F^{\star}(\bar{\Delta}(\th)) \diff P_0(\th)
  &\leq
    b \, \int \exp\pare{\sum_{i=1}^n \ln(1 + B_{i-1} \bar{\Delta}_i(\th)) + \ln \sqrt{2(n+1)}} \diff P_0(\th)\\
  &=
    b \, \sqrt{2(n+1)} \, \int W_n(\th) \diff P_0(\th)
\leq
    b \, \frac{\sqrt{2(n+1)}}{\delta}  \tag{By Ville's inequality}\\
  &=
    \sqrt{2(n+1)} \leq 2 \sqrt{n}~. \tag{Tuning $b = \delta$}
\end{align*}
That said, using Lemma \ref{lemma:lamb} and the above provide a bound on \cref{eq:FY-decomp-hoef} that is
\begin{align*}
  (i)
  &\leq \sqrt{n} +
    \sqrt{2 n} \, \int \abs{\fr{\diff P_n}{\diff P_0}(\th) - 1} \sqrt{\ln\pare{1 + \frac{2 n}{\delta^2} \pare{\fr{\diff P_n}{\diff P_0}(\th) - 1}^2 }} \diff P_0(\th) - \delta\\
  &=
    \sqrt{n} + \sqrt{2 n} \, D_{\zcp}\left(P_n, P_0; \frac{\sqrt{2n}}{\delta}\right) - \delta~.
\end{align*}
Completing the bound and dividing it though by $n$ completes the proof.
\jmlrQED 
\section*{Acknowledgements}
Kwang-Sung Jun was supported in part by National Science Foundation under grant CCF-2327013.

\bibliography{learning}

\newpage

\appendix

\setlength{\abovedisplayskip}{3pt}
\setlength{\belowdisplayskip}{3pt}
\setlength{\abovedisplayshortskip}{3pt}
\setlength{\belowdisplayshortskip}{3pt}

\section{Proof of \Cref{thm:zcp-log-wealth-renyi}: Bound on log-wealth}
\label{sec:zcp-log-wealth}
Recall that $\Wealth_n(\th)$ is wealth of a betting algorithm as defined in \cref{eq:W-theta}.
In the following we will work with its truncated counterpart, that is
$\Wealth^1_n(\th) = 1 \vee \Wealth_n(\th)$ which satisfies $\Wealth^1_n(\th) \le 1 + \Wealth_n(\th)$.

The proof is similar to that of \Cref{thm:zcp-hoeffding}, however 
it has a different structure than the usual PAC-Bayes bounds and we believe it might be interesting on its own.
Here, we are working with $\ln \Wealth^1_n(\th)$ instead of $\Delta(\th)$.
As in the proof of \Cref{thm:zcp-hoeffding} we apply Fenchel-Young, but with a family of functions, $F_{\theta}^*(x) = \exp(\frac{x^2}{2 a})$, and we will choose a different $a$ for each $\theta$, that is, $a = \ln \Wealth^1_n(\th) / 2$, which will lead to the term $\ln \Wealth^1_n(\th)$ as a part of the divergence.
Then, the main idea of this proof is to control the $\ln \Wealth^1_n(\th)$ term in the divergence using the fact that $\ln \Wealth^1_n(\th)$ is often small. Therefore we condition on the event when it is small and the remaining part we control pessimistically, i.e., $\ln \Wealth^2 \leq n$.
\Cref{prop:P-lnW-t} tells us when $\ln \Wealth$ is small.
\begin{lemma}
  \label{prop:P-lnW-t}
  Let $Q \in \cM_1(\Th)$ be independent from data.
  Then, for any $\delta \in (0,1)$ and any $t > 0$,
  \begin{align*}
    \Pr\left\{\Pr_Q\pare{\Wealth_n^1(\th) \ge t} < \frac{2}{t \delta}\right\} \ge 1 - \delta~.
  \end{align*}
\end{lemma}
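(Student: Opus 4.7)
The plan is to combine a conditional Markov's inequality (under $Q$, with the data held fixed) with Ville's inequality (over the data) applied to the martingale $\int \Wealth_n(\th)\,\diff Q(\th)$. Since the stated probability is over the data while $\Pr_Q$ treats $\th$ as random with the data fixed, the natural first move is to conditionally bound $\Pr_Q(\Wealth_n^1(\th) \ge t)$ by $\E_Q[\Wealth_n^1(\th)]/t$ via Markov, and then argue that, with high probability over the data, the expectation $\E_Q[\Wealth_n^1(\th)]$ is itself small.

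First, using $\Wealth_n^1(\th) \le 1 + \Wealth_n(\th)$, I would bound
\begin{align*}
\E_Q[\Wealth_n^1(\th)] \;=\; \int \Wealth_n^1(\th)\,\diff Q(\th)
\;\le\; 1 + \int \Wealth_n(\th)\,\diff Q(\th).
\end{align*}
Next, since $Q$ does not depend on the data and $B_{i-1}(\th)$ is a function of $X_1,\ldots,X_{i-1}$ and $\th$ only, the same computation as in \cref{eq:int-W-is-martingale} (Fubini plus $\E_{i-1}[1 + B_{i-1}(\th)\bar\Delta_i(\th)] = 1$) shows that $M_n \defeq \int \Wealth_n(\th)\,\diff Q(\th)$ is a nonnegative martingale with $M_0 = 1$. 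Applying Ville's inequality (\Cref{thm:ville}) to $M_n$ at level $1/\delta$ yields that, with probability at least $1-\delta$ over the data, $\int \Wealth_n(\th)\,\diff Q(\th) < 1/\delta$, and therefore $\E_Q[\Wealth_n^1(\th)] < 1 + 1/\delta \le 2/\delta$ (using $\delta \le 1$).

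Finally I would combine the two steps: on the event of probability at least $1-\delta$ on which $\E_Q[\Wealth_n^1(\th)] < 2/\delta$, Markov's inequality under $Q$ gives
\begin{align*}
\Pr_Q\!\left(\Wealth_n^1(\th) \ge t\right) \;\le\; \frac{\E_Q[\Wealth_n^1(\th)]}{t} \;<\; \frac{2}{t\delta},
\end{align*}
which is exactly the claim.

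I do not expect real obstacles here; the only subtle point is checking the martingale property of $\int \Wealth_n\,\diff Q$, which requires that $Q$ be independent of the data (so Fubini can exchange $\int\cdot\,\diff Q$ with $\E_{n-1}$) and that the betting iterate $B_{i-1}(\th)$ be predictable with respect to $X_1,\ldots,X_{i-1}$. Both conditions are built into the setup, so the argument goes through cleanly.
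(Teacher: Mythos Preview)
Your proof is correct and follows the same two-step skeleton as the paper: first bound $\Pr_Q(\Wealth_n^1(\th)\ge t)$ by $\E_Q[\Wealth_n^1(\th)]/t$ via Markov under $Q$, then show that $\E_Q[\Wealth_n^1(\th)] < 2/\delta$ with probability at least $1-\delta$ over the data. The only difference is in the second step: the paper applies a \emph{second Markov's inequality} over the data directly to the nonnegative random variable $\E_Q[\Wealth_n^1(\th)]$, using $\E\,\E_Q[\Wealth_n^1(\th)] \le \E\,\E_Q[1+\Wealth_n(\th)] = 2$ (via Fubini and $\E\,\Wealth_n(\th)=1$), which yields $\Pr\{\E_Q[\Wealth_n^1(\th)]\ge 2/\delta\}\le \delta$. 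You instead invoke Ville's inequality on the martingale $\int \Wealth_n(\th)\,\diff Q(\th)$. Both routes are valid; yours has the minor bonus of being time-uniform in $n$, while the paper's is slightly more elementary (no martingale argument needed, just $\E\,\Wealth_n=1$).
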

\begin{proof}
  For every $\th \in \Th$, Markov's inequality implies that
  \begin{align*}
    \PP_Q\{\Wealth_n^1(\th) \ge t\}
    \le \fr{\EE_Q \Wealth_n^1(\th)}{t}~.
  \end{align*}
  Furthermore, another application of Markov's inequality (with respect to data) implies that
  \begin{align*}
    \PP\left\{\EE_Q \Wealth_n^1(\th) \ge \fr{2}{\dt}\right\}
    \le \fr{\dt}{2}  \EE \EE_Q \Wealth_n^1(\th)
    \le \fr{\dt}{2} \EE \EE_Q [1 + \Wealth_n(\th)]
    =   \dt,
  \end{align*}
  where we have used the fact that $\EE \EE_Q \Wealth_n(\th) = \EE_Q \EE \Wealth_n(\th)$ by Fubini's theorem and $\EE \Wealth_n(\th) = 1$.
  We conclude the proof by chaining the two displays above:
  \begin{align*}
    1-\dt 
    \le \PP\left\{\EE_Q \Wealth_n^1(\th) < \fr{2}{\dt}\right\} 
    \le \PP\left\{t\PP_Q(\Wealth_n^1(\th) \ge t) < \fr2\dt\right\}~.
  \end{align*}
  \vfill
\end{proof}
Consider the following decomposition w.r.t.\ a free parameter $t > 0$ to be tuned later:
\begin{align*}
  &\int \ln \Wealth_n(\th) \diff P_n(\th)\\
  &\quad\leq
  \int \ln \Wealth^1_n(\th) \diff P_n(\th)\\
  &\quad=
    \int \ln (\Wealth^1_n(\th))  \cd \del{\fr{\diff P_n}{\diff P_0}(\th)} \, \diff P_0(\th)\\
  &\quad= \int \ln (\Wealth^1_n(\th))  \cd \del{\fr{\diff P_n}{\diff P_0}(\th)} \, \onec{\Wealth^1_n(\th) > t} \diff P_0(\th)\\
       &\qquad+ \int \ln (\Wealth^1_n(\th))  \cd \del{\fr{\diff P_n}{\diff P_0}(\th)} \, \onec{\Wealth^1_n(\th) \leq t} \diff P_0(\th)\\
     &\quad \leq \underbrace{
       n \int  \del{\fr{\diff P_n}{\diff P_0}(\th)} \, \onec{\Wealth^1_n(\th) > t} \diff P_0(\th)
  }_{(i)}\\
     &\qquad+ \underbrace{
       \int \ln (\Wealth^1_n(\th))  \cd \del{\fr{\diff P_n}{\diff P_0}(\th) - 1} \, \onec{\Wealth^1_n(\th) \leq t} \diff P_0(\th)
       }_{(ii)}\\
  &\qquad+ \underbrace{
       \int \ln (\Wealth^1_n(\th)) \onec{\Wealth^1_n(\th) \leq t} \diff P_0(\th)
       }_{(iii)},
\end{align*}
where to get $(i)$ we have upper bounded $\ln \Wealth^1_n(\theta)$ with the pessimistic upper bound $\ln(1+ 2^n) \leq n$.~%
\paragraph{Bounding $(iii)$.}
We note right away that by the fact that $\int \Wealth_n(\th) \diff P_0$ is a martingale (see \cref{eq:int-W-is-martingale}), we can use Ville's inequality
(\Cref{thm:ville}) to have
\begin{align*}
  (iii)
  \le \int \ln \Wealth^1_n(\th) \diff P_0(\th) 
  &\le \ln \left(\int \Wealth^1_n(\th) \diff P_0(\th)\right) \tag{Jensen's inequality}
\\&\le \ln \left(1 + \int \Wealth_n(\th) \diff P_0(\th)\right) 
\\&\leq \ln\left(1 + \fr1\dt\right)~. \tag{Ville's inequality; w.p. $\ge 1 - \dt$ }
\end{align*}
Now we handle remaining terms $(i)$ and $(ii)$.
\paragraph{Bounding $(i)$.}
For the term $(i)$, we have
\begin{align*}
(i) &= n \, \int  \del{\fr{\diff P_n}{\diff P_0}(\th)} \, \onec{\Wealth^1_n(\th) > t} \diff P_0(\th)\\
&\stackrel{(a)}{\leq} n \, \pare{\Pr_{P_0}\pare{\Wealth^1_n(\th) > t}}^{1 - \frac{1}{\alpha}}
\underbrace{\pare{\int \abs{\fr{\diff P_n}{\diff P_0}(\th)}^{\alpha} \diff P_0(\th)}^{\frac{1}{\alpha}}}_{I_{\alpha}} \tag{$\alpha > 1$}\\
&\leq \frac{n I_{\alpha}}{\pare{t \delta / 2}^{1 - \frac{1}{\alpha}}} \tag{By \Cref{prop:P-lnW-t}; w.p. $\ge 1-\dt$ },
\end{align*}
where step $(a)$ comes by H\"older's inequality.
\paragraph{Bounding $(ii)$.}
By Fenchel-Young inequality, for a family of convex $F_\theta : \R \to \R$,
\begin{align*}
(ii) &= \int \ln (\Wealth^1_n(\th))  \cd \del{\fr{\diff P_n}{\diff P_0}(\th) - 1} \, \onec{\Wealth^1_n(\th) \leq t} \diff P_0(\th) \\
&\leq \int F^{\star}_{\theta}(\ln (\Wealth^1_n(\th))) \diff P_0(\th)  + \int F_\theta\del{\fr{\diff P_n}{\diff P_0}(\th) - 1} \, \onec{\Wealth^1_n(\th) \leq t} \diff P_0(\th)~.
\end{align*}
We use Lemma~\ref{lemma:lamb}, choosing $a = \ln(\Wealth^1_n(\th)) / 2$ and $b = \delta$ to have
\[
  F^{\star}_{\theta}(\ln(\Wealth^1_n(\th)))
  =
  \delta \, \exp\pare{\frac{\ln^2(\Wealth^1_n(\th))}{\ln(\Wealth^1_n(\th))}}
  =
    \delta \, \Wealth^1_n(\th)
\]
and so
\begin{align*}
  \int F^{\star}_{\theta}(\ln (\Wealth^1_n(\th))) \diff P_0(\th)
  =
  \delta \, \int \Wealth^1_n(\th) \diff P_0(\th)
  \leq
  \delta
  +
  \delta \int \Wealth_n(\th) \diff P_0(\th)
\leq 1+\delta,
\end{align*}
where the last inequality is a consequence of Ville's inequality.
Finally, we complete the bound on $(ii)$ by having a chain of inequalities on
\begin{align*}
\int &F\del{\fr{\diff P_n}{\diff P_0}(\th) - 1} \, \onec{\Wealth^1_n(\th) \leq t} \diff P_0(\th) \\
&\stackrel{(b)}{\leq} \int \abs{\fr{\diff P_n}{\diff P_0}(\th) - 1} \sqrt{\frac12 \ln (\Wealth^1_n(\th)) \ln\pare{1 + \frac{\ln (\Wealth^1_n(\th))}{2 \delta^2} \, \del{\fr{\diff P_n}{\diff P_0}(\th) - 1}^2}} \onec{\Wealth^1_n(\th) \leq t}\diff P_0(\th)\\
     &\leq \sqrt{\frac{\ln(t)}{2}}
       \int \abs{\fr{\diff P_n}{\diff P_0}(\th) - 1} \sqrt{\ln\pare{1 + \frac{\ln (\Wealth^1_n(\th))}{2 \delta^2} \, \del{\fr{\diff P_n}{\diff P_0}(\th) - 1}^2} } \diff P_0(\th)\\
       &\leq \sqrt{\frac{\ln(t)}{2}} \, D_{\zcp}\Big(P_n, P_0; \, \sqrt{\frac{n}{2\delta^2}} \Big),
\end{align*}
where $(b)$ comes by \Cref{lemma:lamb} and where we once more used a bound $\ln \Wealth^1_n(\th) \leq n$.
\paragraph{Tuning $t$ and completing the proof.}
Putting all together, we have
\begin{align*}
\int \ln \Wealth_n(\th) \diff P_n(\th)
  &\leq \sqrt{\frac{\ln(t)}{2}} \, D_{\zcp}\Big(P_n, P_0; \, \sqrt{\frac{n}{2\delta^2}} \Big)  + \frac{n I_{\alpha}}{\pare{t \delta / 2}^{1 - \frac{1}{\alpha}}}+ \ln\left(1 + \frac{1}{\delta}\right) + 1 + \delta~.
\end{align*}
and setting $t = \frac{2}{\delta} \, (n I_{\alpha})^{\frac{1}{1 - \frac{1}{\alpha}}}$ we obtain
\begin{align*}
  &\int \ln \Wealth_n(\th) \diff P_n(\th)\\
  &\quad\leq \frac{1}{\sqrt{2}}\sqrt{\ln\pare{\frac{2}{\delta}} +  \frac{\alpha}{\alpha - 1} \, \ln\pare{n I_{\alpha}}} \, D_{\zcp}\Big(P_n, P_0; \, \sqrt{\frac{n}{2\delta^2}} \Big) + \ln\left(1 + \frac{1}{\delta}\right) + 2 + \delta\\
  &\quad\leq \frac{1}{\sqrt{2}} \sqrt{ \ln\pare{\frac{2}{\delta}} +  \frac{\alpha}{\alpha - 1} \, \ln(n) +  D_{\alpha}(P_n, P_0)} \, D_{\zcp}\Big(P_n, P_0; \, \sqrt{\frac{n}{2\delta^2}} \Big)
    + \ln\left(1 + \frac{1}{\delta}\right) + 2 + \delta~.
\end{align*}
Now we make this bound to hold uniformly over $n \in \mathbb{N} \setminus \{1\}$.
In particular, denoting by $\mathrm{bound}(\delta)$ the r.h.s.\ of the inequality in the above,
we have that $\Pr\{\int \ln \Wealth_n(\th) \diff P_n(\th) > \mathrm{bound}(\delta)\} \leq 2 \delta$
(note that $2 \delta$ comes by applying a union bound since we used \Cref{prop:P-lnW-t} and Ville's inequality).
Now, to make this bound uniform over $n$ we apply a union bound over a set $n \in \bigcup_{i=2}^{\infty} [i] = \mathbb{N} \setminus \{1\}$, that is
\begin{align*}
  &\Pr\left\{\exists n \in \mathbb{N} \setminus \{1\} ~:~ \int \ln \Wealth_n(\th) \diff P_n(\th) > \mathrm{bound}\left(\frac{\delta}{n(n+1)} \right)\right\}\\
  &\quad\leq
  \sum_{n \geq 2} \Pr\left\{\int \ln \Wealth_n(\th) \diff P_n(\th) > \mathrm{bound}\left( \frac{\delta}{n(n+1)}  \right)\right\}
   \leq \sum_{n \in \mathbb{N} \setminus \{1\}} \frac{2 \delta}{n (n+1)} \leq \delta~.
\end{align*}
Finally we apply a regret bound (\Cref{thm:regret_up}) to lower bound $\int \ln \Wealth_n(\th) \diff P_n(\th)$ with  $\int \ln \Wealth_n^*(\th) \diff P_n(\th)$ (the regret $\ln(\sqrt{2(n+1)})$ appears at the r.h.s.).
Eventually we get
\begin{align*}
  &\int \ln \Wealth^*(\th) \dP_n(\th)\\
  &\quad\leq \frac{1}{\sqrt{2}} \sqrt{ \ln\pare{\frac{2 n (n+1)}{\delta}} +  \frac{\alpha}{\alpha - 1} \, \ln(n) +  D_{\alpha}(P_n, P_0)} \, D_{\zcp}\Big(P_n, P_0; \, \sqrt{\frac{n (n (n+1))^2}{2\delta^2}} \Big)\\
  &\qquad+ \ln\left(1 + \frac{n (n+1)}{\delta}\right) + \ln(\sqrt{2(n+1)}) + 2 + \frac{\delta}{n (n+1)}\\
  &\quad\leq \frac{1}{\sqrt{2}} \sqrt{ \ln\pare{\frac{4 n^2}{\delta}} +  \frac{\alpha}{\alpha - 1} \, \ln(n) +  D_{\alpha}(P_n, P_0)} \, D_{\zcp}\Big(P_n, P_0; \, \frac{\sqrt{2} \, n^{2.5}}{\delta} \Big)\\
    &\qquad + \ln\pare{2 e^2 \sqrt{n} \, \Big(1 + \frac{4 n^2}{\delta}\Big)} + \frac{\delta}{n (n+1)}~.
\end{align*}
The proof is now complete.
\jmlrQED \clearpage
\section{Proof of \Cref{prop:thm:zcp-log-wealth-asymptotics}: Asymptotic behaviour of expected log-wealth bound}
\label{sec:zcp-log-wealth-asymptotics}

Fix $P_0 \in \cM_1(\Th)$.
Let $\cK = \bigcup_{n=1}^{\infty} \cK(\cX^n, \Th)$.
Consider \Cref{thm:zcp-log-wealth-renyi} with $\delta = 1/n^2$ and $\alpha = \alpha_n := 1 + \frac{1}{\ln(n)}$:
\begin{align*}
  &\int \ln \Wealth^*(\th) \dP_n(\th)\\
  &\quad \leq\frac{1}{\sqrt{2}} \sqrt{ \ln(2 n^4) +  \ln(n) \ln(e n) + D_{\alpha_n}(P, P_0)} \, D_{\zcp}\Big(P, P_0; \, \sqrt{2} \, n^{4.5}\Big) 
  \\&\qquad+ \ln\pare{\sqrt{n} \, \Big(1 + 4 n^4\Big)} + \ln(2 e^2) + \frac{1}{n^3(n+1)}
  \\&\quad \leq\sqrt{ \ln(2 n^4) +  \ln(n) \ln(e n) + D_{\alpha_n}(P, P_0)}\pare{\fr{D_\ZCP(P, P_0;1)}{\sqrt{2}} + \sqrt{2\ln(2 + 2\sqrt{2} n^{4.5})} D_{\TV}(P, P_0)}
  \\&\qquad+ \ln\pare{\sqrt{n} \, \Big(1 + 4 n^4\Big)} + \ln(2e^3)
\end{align*}
where we applied \Cref{lem:int-to-zcp} to get the second inequality. 

Thus, abbreviating $F_n(P) = \int \ln W_n^*(\th) \diff P(\th)$ and a right-hand side in the above by $B_n(P, P_0)$ we have that
\begin{align*}
  \Pr\pare{\exists P \in \cK ~:~ F_n(P) > B_n(P, P_0)} \leq \frac{1}{n^2}
\end{align*}
Let $L(n) = \sqrt{2\ln(n) \ln(e n) \ln(2 + 2\sqrt{2} \, n^{4.5})}$ and note that
Noting that $L(n)$ is a dominating log-term in $B_n(P, P_0)$ for $n \geq 25$, and so using subadditivity of $\sqrt{\cdot}$ and some basic calculus gives
\begin{align*}
  \frac{B_n(P,P_0)}{L(n)}
  &\leq
  \underbrace{
    2 + \big(2 + \sqrt{D_{\alpha_n}(P, P_0)}\big) (D_\ZCP(P,P_0;1) + D_{\TV}(P, P_0))
  }_{=: A_n(P, P_0)}
\end{align*}
So,
\begin{align*}
  &\Pr\pare{\exists P \in \cK ~:~ \frac{F_n(P)}{L(n) A_n(P, P_0)} > 1}\\
  &\quad\leq
  \Pr\pare{\exists P \in \cK ~:~ \frac{F_n(P)}{L(n) \, A_n(P, P_0)} > \frac{B_n(P, P_0)}{L(n) \, A_n(P, P_0)}} \leq \frac{1}{n^2}~.
\end{align*}
Now verify that
\begin{align*}
  \sum_{n=25}^{\infty} \Pr\pare{\exists P \in \cK ~:~ \frac{F_n(P)}{L(n) A_n(P, P_0)} > 1} < \infty
\end{align*}
and by the Borel-Cantelli lemma
\begin{align*}
  \Pr\pare{ \cap_{n=1}^\infty \cup_{m=n}^\infty \cup_{P\in \cK}\cbr{\frac{F_m(P)}{L(m) \, A_m(P, P_0)} > 1 }} = 0~.
\end{align*}
Note that we have \begin{align*}
  \cup_{P\in \cK} \cap_{n=1}^\infty \cup_{m=n}^\infty \cbr[2]{\frac{F_m(P)}{L(m) \, A_m(P, P_0)} > 1 }
  \\\implies 
  \cap_{n=1}^\infty \cup_{m=n}^\infty \cup_{P\in \cK}\cbr[2]{\frac{F_m(P)}{L(m) \, A_m(P, P_0)} > 1 }~.
\end{align*}
Thus,
\begin{align*}
  \Pr\pare{\exists P \in \cK ~:~ \frac{F_n(P)}{L(n) \, A_n(P, P_0)} > 1 \quad i.o.} = 0
\end{align*}
which implies that
\begin{align*}
  \Pr\pare{\forall P \in \cK ~:~ \limsup_{n \to \infty} \ \frac{F_n(P)}{L(n) \, A_n(P, P_0)} \leq 1} = 1
\end{align*}
Now consider the property of $\limsup$ which states that for bounded real sequences $(a_n)_{n\geq 1}, (b_n)_{n\geq 1}$, $\limsup \  a_n b_n = b \limsup \ a_n$ whenever $\lim_{n \to \infty} b_n = b$.
Assuming for now that $(A_n(P, P_0))_n$ is bounded
we have
\begin{align*}
  \limsup_{n \to \infty} \ \frac{F_n(P)}{L(n) \, A_n(P, P_0)}
  =
  \pare{\frac{1}{\lim_{n\to \infty} \ A_n(P, P_0)}}\limsup_{n \to \infty} \ \frac{F_n(P)}{L(n)}~
\end{align*}
which means that
\begin{align*}
  \Pr\pare{\forall P \in \cK ~:~ \limsup_{n \to \infty} \ \frac{F_n(P)}{L(n)} \leq \lim_{n \to \infty} \ A_n(P, P_0)} = 1~.
\end{align*}
Finally, we look at the limit of $A_n$.
Since $P$ is absolutely continuous with respect to $P_0$, $D_{\KL}(P, P_0) < \infty$ and the same hold for $D_{\zcp}$.
Using the fact that $\lim_{n \to \infty} \ D_{\alpha_n}(P, P_0) = \lim_{\alpha \to 1} \  D_{\alpha}(P, P_0) = D_{\KL}(P, P_0)$, we have
\begin{align*}
  \lim_{n \to \infty} A_n =
  2 + \big(2 + \sqrt{D_{\KL}(P, P_0)}\big) (D_\ZCP(P,P_0;1) + D_{\TV}(P, P_0))~.
\end{align*}
The proof is now complete.
\jmlrQED

\clearpage
\section{Other omitted proofs}

\subsection{Proof of \Cref{lem:int-to-zcp}}
\label{sec:int-to-zcp-proof}
  The proof relies on the following lemma:
  
\begin{lemma}
  For every $x \in\RR$ and $c \ge 0$, we have
  \begin{align*}
    \ln(1 + cx^2) \le \ln(1 + x^2) + \ln(2 + 2c)
  \end{align*}
\end{lemma}
\begin{proof}
  If $cx^2 \ge 1$, then we have
  \begin{align*}
    \ln(1 + cx^2) \le \ln(2cx^2) = \ln(x^2) + \ln(2c) \le \ln(1 + x^2) + \ln(2c)~.
  \end{align*}
  Otherwise, we have
  \begin{align*}
    \ln(1 + cx^2) \le \ln(2) \le \ln(1 + x^2) + \ln(2)~.
  \end{align*}
  In either case, we have
  \begin{align*}
    \ln(1 + cx^2) 
    \le \ln(1 + x^2) + \ln(2 \vee 2c)
    \le \ln(1 + x^2) + \ln(2 + 2c)~.
  \end{align*}
\end{proof}
Using the lemma above, we obtain
\begin{align*}
  &D_{\zcp}(P, Q; c)
  \\&\quad=\int \abs{\fr{\diff P}{\diff Q}(\th) - 1} \sqrt{\ln\pare{1 + c^2 \, \pare{\fr{\diff P}{\diff Q}(\th) - 1}^2 }} \diff Q(\th)
  \\&\quad\leq
  \int \abs{\fr{\diff P}{\diff Q}(\th) - 1} \sqrt{\ln\pare{1 + \abs{\fr{\diff P}{\diff Q}(\th) - 1}^2 }} \diff Q(\th)
  +
  \sqrt{\ln(2 + 2c)} \, \int \abs{\fr{\diff P}{\diff Q}(\th) - 1} \diff Q(\th)
  \\&\quad=  \, D_\ZCP(P,Q; 1) + 2\sqrt{\ln(2 + 2c)} \, D_{\TV}(P, Q)~.
\end{align*}
\jmlrQED
\vfill
\subsection{Proof of \Cref{prop:Dh-to-tv-kl}}
\label{sec:Dh-to-tv-kl-proof}

One can see that  $\forall x \in \RR, \ln(1 + x^2) \le \ln(1 + 2|x|  + x^2) \le \ln( (1 + |x|)^2) = 2 \ln(1 + |x|)$.
Using this, 
\begin{align*}
  D_\ZCP(P,Q; 1)
  &= \int_{\Th} \envert{\frac{\diff P(\th)}{\diff Q(\th)} - 1} \sqrt{\ln\del{1 +
      \envert{\frac{\diff P(\th)}{\diff Q(\th)} - 1}^2 }} \diff Q(\th)
\\&\le \sqrt{2} \int_{\Th} \envert{\frac{\diff P(\th)}{\diff Q(\th)} - 1} \sqrt{\ln\del{1 +
      \envert{\frac{\diff P(\th)}{\diff Q(\th)} - 1} }} \diff Q(\th)
\\&\stackrel{(a)}{\le} \sqrt{2\int_{\Th} \envert{\frac{\diff P(\th)}{\diff Q(\th)} - 1} \diff Q(\th) \cd \int_{\Th}{\envert{\frac{\diff P(\th)}{\diff Q(\th)} - 1}\ln\del{1 +
      \envert{\frac{\diff P(\th)}{\diff Q(\th)} - 1}}} \diff Q(\th)}
\\&= \sqrt{2\int_{\Th} \envert{\frac{\diff P(\th)}{\diff Q(\th)} - 1} \diff Q(\th) \cd D_{f_1}(P,Q)} \tag{$f_1(x) = |x-1| \ln(1+|x-1|)$ }
\\&\stackrel{(b)}{\le} \sqrt{2\int_{\Th} \envert{\frac{\diff P(\th)}{\diff Q(\th)} - 1} \diff Q(\th) \cd 2D_{f_2}(P,Q)} \tag{$f_2(x) = 1 - x + x\ln(x)$}
\\&= \sqrt{4 D_\TV(P,Q) \cd 2D_{\KL}(P,Q)},
\end{align*}
where $(a)$ follows by Cauchy-Schwartz inequality and $(b)$ is by \citet[Lemma C.1]{zhang2022optimal}.
\jmlrQED
\subsection{Proof of \Cref{prop:gaussian-instance}}
\label{sec:gaussian-instance-proof}
\textbf{Case 1}: From \citep{nielsen2018guaranteed} we have that the TV distance is bounded as $D_{\TV}(P,Q)\leq\frac{1}{2}\cdot 2p=p$.
On the other hand, from the convexity of $\KL$ divergence, we have that 
\begin{align*}
  D_{\KL}(P,Q)
  \le p D_{\KL}(A,B)+(1-p) D_{\KL}(B,B) 
  =p D_{\KL}(A,B)~.
\end{align*}
Now, for Gaussians we have that
\[
D_{\KL}(A,B)
=\ln \frac{\sigma_2}{\sigma_1}+\frac{\sigma_1^2}{2 \sigma_2^2}-\frac{1}{2}
\]
and in particular
choosing $\sigma_2=p \sigma_1$, $D_{\KL}(A, B) = \ln(p)+\frac{1}{2p^2}-\frac{1}{2}$, which in turn implies that
$D_{\KL}(P,Q) \leq p D_{\KL}(A, B) = p (\ln p+\frac{1}{2p^2}-\frac{1}{2}) \le \frac{1}{2p}$.
Thus $D_{\TV}(P,Q) \cdot D_{\KL}(P,Q) \le \frac{1}{2}$.
On the other hand, we have that
\begin{align*}
D_{\KL}(P,Q) 
&= \int \dP \ln \frac{p \dif A +(1-p) \dif B}{ \dif B}\\
&=\int \dP \ln\left(\frac{p \dif A}{\dif B} +(1-p)\right)\\
&= \ln (1-p) +\int \dP \ln\left(\frac{p \dif A}{(1-p)\dif B} +1\right)\\
&= \ln (1-p) +\int (p\dif A+ (1-p) \dif B) \ln\left(\frac{p \dif A}{(1-p)\dif B} +1\right)\\
& \geq \ln (1-p) +\int p\dif A \ln\left(\frac{p \dif A}{(1-p)\dif B} +1\right)\\
& \geq \ln (1-p) +\int p\dif A \ln\frac{p \dif A}{(1-p)\dif B}\\
  & = \ln (1-p) + p D_{\KL}(A,B) + p \ln \frac{p}{1-p}\\
  &= \ln (1-p) + p \ln(p)+\frac{1}{2p}-\frac{p}{2} + p \ln \frac{p}{1-p}\\
&\geq \frac{1}{2p}-1.3,
\end{align*}
where $-1.3$ comes from the  minimization of $p \mapsto \ln (1-p) + p \ln(p) -\frac{p}{2} + p \ln \frac{p}{1-p}$ over $p \in [0,1]$.

\textbf{Case 2:} Choosing $\sigma_2=p^{3/4}\sigma_1$, $D_{\TV}(P,Q)\leq p$ and $D_{\KL}(P,Q) \leq p D_{\KL}(A, B) = p(\ln \frac{\sigma_2}{\sigma_1}+\frac{\sigma_1^2}{2 \sigma_2^2}-\frac{1}{2})=p(\frac{3}{4}\ln p+\frac{1}{2p^{3/2}}-\frac{1}{2})\le \frac{1}{2\sqrt{p}}$. Thus, $D_{\KL}(P,Q) \cdot \sqrt{D_{\TV}(P,Q)}\le \frac{1}{2}$. On the other hand, reasoning as above, we have
\begin{align*}
D_{\KL}(P,Q) 
  & \ge  \ln (1-p) + p D_{\KL}(A,B) + p \ln \frac{p}{1-p}\\
  &= \ln (1-p) + p \left(\frac{3}{4}\ln p+\frac{1}{2p^{3/2}}-\frac{1}{2}\right)+p \ln \frac{p}{1-p},\\
  & \ge \frac{1}{2\sqrt{p}}-1.22,
\end{align*}
where $-1.22$ comes from the  minimization of $\ln (1-p) + p (\frac{3}{4}\ln p-\frac{1}{2})+p \ln \frac{p}{1-p}$ over $p \in [0,1]$.
\jmlrQED
\subsection{Proof of empirical Bernstein inequalities (\Cref{cor:empiricalbernstein})}
\label{sec:empiricalbernstein-proof}
To show this corollary we follow \citet[Proposition 4]{jang2023tighter}.
Abbreviate $\hat \mu_{\th} = \frac1n \sum_{i=1}^n f(\th, X_i)$, and so $\Delta_n(\th) + \hat{\mu}_{\theta}  \in [0,1]$.
Then, we have
\begin{align*}
  \ln \Wealth_n^*(\th)
  \geq
  \max_{\beta \in [-1,1]} \ \sum_{i=1}^n \ln\pare{1 + \beta \pare{f(\theta,X_i) - (\hat{\mu}_{\theta} + \Delta(\th))}},
\end{align*}
and applying Jensen's inequality
\begin{align}
  \label{eq:bernproof1}
  \int \ln \Wealth_n^*(\th) \diff P_n
  \geq
  \max_{\beta \in [-1,1]} \ \int \sum_{i=1}^n \ln\pare{1 + \beta \pare{f(\theta,X_i) - (\hat{\mu}_{\theta} + \Delta(\th))}} \diff P_n~.
\end{align}
We relax the above by taking a lower bound of \citep[Eq. 4.12]{fan2015exponential} which shows that
for any $|x| \leq 1$ and $|\beta| \leq 1$,
\begin{align}
  \label{eq:logineq}
  \ln(1 + \beta x) \geq \beta x + \big(\ln(1 - |\beta|) + |\beta|\big) x^2~.
\end{align}
Then, combined with the following lemma:
\begin{lemma}[{\citet[Lemma 5]{orabona2023tight}}]
  \label{lem:empiricalbernsteintechnical}
  Let $f(\beta) = a \beta + b \big(\ln(1 - |\beta|) + |\beta|\big)$ for some $a \in \reals, b \geq 0$.
  Then, $\max_{\beta \in [-1,1]} f(\beta) \geq \frac{a^2}{(4/3) |a| + 2 b}$.
\end{lemma}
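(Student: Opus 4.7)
\textbf{Proof proposal for Lemma~\ref{lem:empiricalbernsteintechnical}.}
The plan is to reduce the problem to a one-variable scalar inequality about $u - \ln(1+u)$, and then verify that inequality by a short monotonicity argument. First I would use symmetry to reduce to the case $a \geq 0$: since $f(-\beta) = -a\beta + b(\ln(1-|\beta|)+|\beta|)$ has the same functional form with $a$ replaced by $-a$, we have $\max_{\beta \in [-1,1]} f(\beta) = \max_{\beta \in [-1,1]} \tilde f(\beta)$ where $\tilde f$ has coefficient $|a|$. So I may assume $a\geq 0$ without loss of generality, and in that case the maximizer has $\beta \geq 0$ (since for $\beta < 0$ the linear term hurts and the concave term is unchanged in sign), so I can restrict to $\beta \in [0,1)$ where $|\beta|=\beta$ and $f(\beta) = a\beta + b(\ln(1-\beta)+\beta)$.

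On $[0,1)$ the function $f$ is concave (linear plus $b$ times a concave function), so its maximizer is found by setting $f'(\beta) = a - b\beta/(1-\beta) = 0$, yielding $\beta^\star = a/(a+b) \in [0,1)$. Substituting gives the closed form
\begin{align*}
  \max_{\beta \in [-1,1]} f(\beta) \;=\; f(\beta^\star) \;=\; a - b\ln\!\pare{1 + \tfrac{a}{b}}
\end{align*}
(the degenerate cases $b=0$ and $a=0$ are checked directly: they give $f = a$ and $f=0$ respectively, both $\geq a^2/((4/3)a+2b)$). Thus, with $u \df a/b \geq 0$, the lemma reduces to the scalar inequality
\begin{align*}
  u - \ln(1+u) \;\geq\; \frac{u^2}{(4/3)\,u + 2} \qquad \text{for all } u \geq 0.
\end{align*}

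The final step is to establish this inequality, which I would do by defining $g(u) \df u - \ln(1+u) - u^2/((4/3)u+2)$, noting $g(0)=0$, and showing $g'(u) \geq 0$. Differentiating and clearing denominators, the condition $g'(u) \geq 0$ becomes the polynomial inequality $((4/3)u+2)^2 \geq (1+u)\cdot(4/3)\cdot(u+3)$, which after expansion reduces to $(4/9)u^2 \geq 0$, trivially true. Combined with $g(0)=0$ this yields the scalar inequality, and hence the lemma. I expect this last polynomial verification to be the only delicate step: the constant $4/3$ in the denominator is precisely what makes the difference of the two quadratics reduce to $(4/9)u^2$, so the bound in the statement appears to be the tightest one obtainable from the concave envelope of $a\beta + b(\ln(1-\beta)+\beta)$ without additional structural assumptions on $a,b$.
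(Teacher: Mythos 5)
Your proof is correct. Note that the paper itself does not reprove this lemma — it cites it directly from \citet[Lemma~5]{orabona2023tight} — so there is no in-paper argument to compare against, and I can only assess your proposal on its own terms. Your route is to exploit the symmetry to reduce to $a\ge 0$, observe that the objective is concave on $[0,1)$, solve the first-order condition $f'(\beta)=a - b\beta/(1-\beta)=0$ to get $\beta^\star = a/(a+b)$ and the closed form $f(\beta^\star)=a - b\ln(1+a/b)$, and then reduce the claim to the scalar inequality $u - \ln(1+u) \ge u^2/\big((4/3)u+2\big)$ for $u\ge 0$, which you verify by showing $g(0)=0$ and $g'(u)\ge 0$ via the polynomial identity $\big((4/3)u+2\big)^2 - (4/3)(u+1)(u+3) = (4/9)u^2\ge 0$. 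All of these steps check out, including the degenerate cases $a=0$ and $b=0$. One remark: many proofs of lemmas of this shape instead lower-bound $\ln(1-\beta)+\beta \ge -\beta^2/\big(2(1-\beta)\big)$ and plug in a hand-picked suboptimal $\beta$; your exact-maximization approach is arguably cleaner and at least as tight, since it gives the true optimum $a - b\ln(1+a/b)$ before relaxing to the rational bound.
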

we get a chain of inequalities:
\begin{align*}
  \int \ln \Wealth_n^*(\th) \diff P_n
  &\stackrel{(a)}{\geq}
    \beta \int \sum_{i=1}^n \pare{f(\theta,X_i) - (\hat{\mu}_{\theta} + \Delta_n(\th))} \diff P_n\\
  &\quad+ \big(\ln(1 - |\beta|) + |\beta|\big) \int \sum_{i=1}^n \pare{f(\theta,X_i) - (\hat{\mu}_{\theta} + \Delta_n(\th))}^2 \diff P_n\\
  &=
    - n \beta \int \Delta_n(\th) \diff P_n\\
  &\quad+ \big(\ln(1 - |\beta|) + |\beta|\big) \pare{
    \int \sum_{i=1}^n (f(\theta,X_i) - \hat{\mu}_{\theta})^2 \diff P_n + n \int \Delta_n(\th)^2 \diff P_n
    }\\
  &\stackrel{(b)}{\geq}
    - n \beta \int \Delta_n(\th) \diff P_n\\
  &\quad + \big(\ln(1 - |\beta|) + |\beta|\big) \pare{
    \int \sum_{i=1}^n (f(\theta,X_i) - \hat{\mu}_{\theta})^2 \diff P_n + n \pare{\int \Delta_n(\th) \diff P_n}^2
    }\\
  &\stackrel{(c)}{\geq}
    \frac{n^2 \pare{\int \Delta_n(\th) \diff P_n}^2}{(4/3) n \left| \int \Delta_n(\th) \diff P_n \right|
    +    
    2 \int \sum_{i=1}^n (f(\theta,X_i) - \hat{\mu}_{\theta})^2 \diff P_n + 2 n \pare{\int \Delta(\th) \diff P_n}^2
    }.
\end{align*}
Here, step $(a)$ comes from \cref{eq:bernproof1,eq:logineq}, whereas $(b)$ comes from Jensen's inequality, and step $(c)$ is due to application of \Cref{lem:empiricalbernsteintechnical}.
At this point, the first result of \Cref{cor:empiricalbernstein} comes from the above combined with the fact that $|\Delta_n| \leq 1$ and by using \cref{eq:asymptotic-bound}.

Now, to state the second result of \Cref{cor:empiricalbernstein} we use a PAC-Bayes bound of \Cref{thm:zcp-log-wealth-renyi} to have
\begin{align*}
  n^2 \pare{\int \Delta_n(\th) \diff P_n}^2
  \leq
  n \, \mathrm{Comp}_n(\alpha) \,
  \pare{ \frac43 \left| \int \Delta_n(\th) \diff P_n \right|
    +    
  2 \hat{V}(P_n) + 2 \pare{\int \Delta_n(\th) \diff P_n}^2}~.
\end{align*}
Solving the above for $\int \Delta_n(\th) \diff P_n$, using subadditivity of square root, and relaxing some numerical constants
we get
\[
  \abs{\int \Delta_n(\th) \diff P_n}
  \leq
  \frac{\sqrt{2 \, \mathrm{Comp}_n(\alpha) \, \hat{V}(P_n)}}{(\sqrt{n} - (2/\sqrt{n}) \, \mathrm{Comp}_n(\alpha))_+}
  +
  \frac{2 \mathrm{Comp}_n(\alpha)}{\pare{n - 2 \, \mathrm{Comp}_n(\alpha)}_+}~.
\]
The proof of the asymptotic version is immediate from the proof of \Cref{prop:thm:zcp-log-wealth-asymptotics}.
\jmlrQED
\subsection{Proof of \Cref{lemma:wealth_lower_bound}}
  We have that
  \begin{align*}
    \exp\pare{\frac{\pare{\sum_{i=1}^n c_i}^2}{4 n}}
    &=
      \max_{\beta \in [-1/2, 1/2]} \exp\pare{\beta \sum_{i=1}^n c_i - \beta^2 n} \\
    &\leq
      \max_{\beta \in [-1/2, 1/2]} \exp\pare{\beta \sum_{i=1}^n c_i - \beta^2 \sum_i c_i^2}\\
    &\leq
      \max_{\beta \in [-1/2, 1/2]} \exp\pare{\sum_{i=1}^n \ln(1 + \beta c_i)},
  \end{align*}
  where we use the elementary inequality $\ln(1+x) \geq x - x^2$ for $|x| \leq 1/2$.
\jmlrQED

\end{document}